\documentclass[twoside]{article}
\usepackage{amsmath,amsfonts,bm}

\def\eqref#1{equation~\ref{#1}}
\def\1{\bm{1}}

\DeclareMathAlphabet{\mathsfit}{\encodingdefault}{\sfdefault}{m}{sl}
\SetMathAlphabet{\mathsfit}{bold}{\encodingdefault}{\sfdefault}{bx}{n}

\usepackage{hyperref}
\usepackage{url}
\usepackage{algorithmic}

\usepackage[utf8]{inputenc} % allow utf-8 input
\usepackage[T1]{fontenc}    % use 8-bit T1 fonts
\usepackage{url}            % simple URL typesetting
\usepackage{booktabs}       % professional-quality tables
\usepackage{amsfonts}       % blackboard math symbols
\usepackage{nicefrac}       % compact symbols for 1/2, etc.
\usepackage{microtype}      % microtypography
\usepackage{xcolor}         % colors
\usepackage{amsmath}
\usepackage{amssymb}
\usepackage{mathtools}
\usepackage{amsthm}
\usepackage[table]{xcolor}
\newcommand{\bbw}{{\bf W}}
\theoremstyle{plain}
\newtheorem{theorem}{Theorem}[section]
\newtheorem{proposition}[theorem]{Proposition}
\newtheorem{lemma}[theorem]{Lemma}

\theoremstyle{definition}
\newtheorem{definition}[theorem]{Definition}

\theoremstyle{remark}
\newtheorem{remark}[theorem]{Remark}
\newtheorem*{proposition*}{Proposition}
\usepackage{bm}
\DeclareMathOperator{\erf}{erf}
\usepackage[textsize=tiny]{todonotes}
\usepackage{newfloat}
\usepackage{listings}
\usepackage[preprint]{aistats2026}
\usepackage[numbers]{natbib}

\newcommand{\bW}{\mathbf{W}}
\newcommand{\bz}{\mathbf{z}}
\newcommand{\bomega}{\boldsymbol{\omega}}
\newcommand{\btheta}{\boldsymbol{\theta}}
\newcommand{\bA}{\boldsymbol{A}}
\newcommand{\bB}{\boldsymbol{B}}

\begin{document}

\twocolumn[

\aistatstitle{Train Less, Infer Faster: Efficient Model Finetuning and Compression via Structured Sparsity}

\aistatsauthor{ Jonathan Svirsky \And Yehonathan Refael \And  Ofir Lindenbaum }

\aistatsaddress{ Bar Ilan University \\ jonathan.svirsky@biu.ac.il \And  Tel Aviv University \\  refaelkalim@mail.tau.ac.il \And Bar Ilan University \\ ofir.lindenbaum@biu.ac.il } ]

\begin{abstract}
Fully finetuning foundation language models (LMs) with billions of parameters is often impractical due to high computational costs, memory requirements, and the risk of overfitting. Although methods like low-rank adapters help address these challenges by adding small trainable modules to the frozen LM, they also increase memory usage and do not reduce inference latency. We uncover an intriguing phenomenon: sparsifying specific model rows and columns enables efficient task adaptation without requiring weight tuning. We propose a scheme for effective finetuning via sparsification using training stochastic gates, which requires minimal trainable parameters, reduces inference time, and removes 20--40\% of model parameters without significant accuracy loss. Empirical results show it outperforms recent finetuning baselines in efficiency and performance. Additionally, we provide theoretical guarantees for the convergence of this stochastic gating process, and show that our method admits a simpler and better-conditioned optimization landscape compared to LoRA. Our results highlight sparsity as a compelling mechanism for task-specific adaptation in LMs.
\end{abstract}

\section{Introduction}
Foundation language models (LMs) have transformed natural language processing by enabling a wide variety of powerful and versatile applications. Pre-trained on extensive amounts of text data, these models demonstrate strong performance in tasks such as text generation, translation, and sentiment analysis. Nonetheless, fine-tuning is often essential to customize these models for specific applications. Finetuning enables the model to adapt to the nuances of a particular task by updating its parameters based on a smaller, task-specific dataset. However, finetuning presents challenges, particularly when task-specific data is limited, which can constrain the model's ability to effectively adapt to the target task and may lead to overfitting or suboptimal performance. Despite these challenges, finetuning remains crucial in leveraging the full potential of large language models for specialized applications.

\begin{figure}[t]
\centering
\includegraphics[width=.8\linewidth]{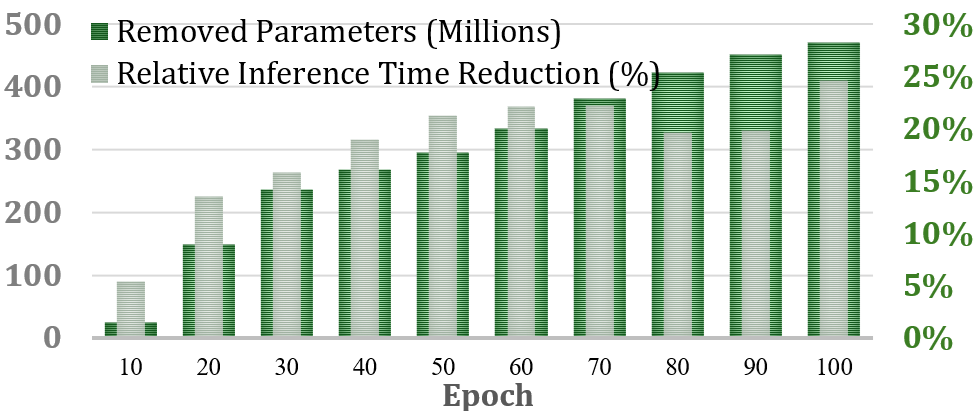}
% \vskip -0.1in
\caption{CPU inference time reduction (\%) and number of removed parameters on the MRPC validation set while finetuning our method on the Llama3.2-1B backbone. See Section~\ref{sec:inference_time} for details.}
\label{fig:speedup}
\vskip -0.2in
\end{figure}

Recently, several methods have been proposed to efficiently finetune foundation language models, addressing the challenges of limited data and the high computational cost of updating the full set of model parameters. One such approach is LoRA (Low-Rank Adaptation) \citep{hu2021lora}, which introduces a more efficient method for finetuning models by freezing the original parameters and replacing them with trainable, low-rank matrices in each transformer layer. This technique significantly reduces the number of parameters that need to be updated during finetuning, making the process both faster and less resource-intensive. Most recent efforts in making finetuning more efficient \citep{zhang2023lora, chavan2023one, xu2023qa, li2022parameter, lin2024nora, balazy2024lora, hu2021lora,refael2025sumo,refael2025lorenza} focus on adding new parameters while keeping the base model frozen, ensuring that the original pre-trained knowledge is retained \citep{rozner2024knowledge}.

When training models using low-rank adapters \citep{kopiczko2023vera, zhang2023lora, lin2024nora, balazy2024lora, hu2021lora}, the number of optimized parameters is reduced, leading to faster convergence during finetuning. However, the inference runtime and memory demands remain the same, as they still depend on the dimensions of the base model parameters, which are frozen during the adaptation. Several methods have been proposed for LM compression, such as quantization and pruning, to reduce the inference time for adapted models. Quantization reduces the memory usage of language models by converting their parameters into lower-bit data types \citep{bondarenko2024low, lin2024awq}. Although quantization reduces the memory consumption of language models, its speedup advantages rely on specialized framework support, which limits its flexibility and adaptability. 

Pruning \citep{ma2023llm} aims to improve inference efficiency in language models by removing unimportant parameters. Structured pruning \citep{xia2022structured,refael2024learningklevelstructuredsparse} eliminates coherent blocks of parameters or entire model dimensions, leading to broad improvements in inference efficiency by reducing computational overhead. However, these methods often require substantial additional training time and involve increasing the number of trainable parameters to facilitate effective self-distillation of knowledge from the layers of a large language teacher model to the smaller student model \citep{hinton2015distilling, xia2022structured}. 

In this work, we propose a simple yet effective approach for adapting language models through structured sparsification, which achieves inference-time efficiency by pruning the overparameterized model and updating only lightweight bias parameters. Instead of learning new parameters or updating main model weights, we learn a set of compact binary masking vectors (gates) that selectively activate a sparse subset of the model's existing weights. This approach leverages the inherent redundancy in overparameterized models, demonstrating that effective task adaptation can be achieved by identifying and utilizing the relevant subnetwork within the frozen base model. The outcome is a highly efficient adaptation mechanism that restructures the original model through sparsification, enabling impressive task-specific performance and inference-time speedup, as presented in Figure \ref{fig:speedup}. Moreover, the proposed method is broadly applicable to both downstream tasks and pretraining scenarios. 

Our gating mechanism is trained end-to-end, embedding task-specific information directly during the optimization process. This eliminates the need for post-training pruning or prolonged finetuning, resulting in models that are both compact and effective, particularly well-suited for deployment in resource-constrained environments.

We evaluate our method on Transformer-based models and demonstrate that up to 40\% of the base model’s parameters can be deactivated with minimal performance loss compared to full finetuning and several baselines. In the following sections, we describe the method in detail, present empirical results, and provide a theoretical analysis of convergence, including a comparison of the optimization landscape of our gating approach versus LoRA, showing that our method enjoys a simpler and better-conditioned geometry.

\section{Related Work}

\subsection{Low-Rank Adaptation}

Low-rank adaptation techniques aim to efficiently finetune foundation language models (LMs) under resource constraints by updating a small subset of parameters. This is typically achieved by introducing trainable components such as adapter layers \citep{pfeiffer2020adapterfusion, houlsby2019parameter}, prompt or prefix embeddings \citep{lester2021power, li2021prefix}, or by imposing low-rank structures on the gradients during training \citep{refael2024adarankgrad}. A widely adopted method, LoRA \citep{hu2021lora}, only trains low-rank matrices to reduce memory requirements. However, LoRA still introduces additional parameters and maintains the original model dimensionality, offering no compression benefits during inference. Moreover, it requires selecting a fixed-rank hyperparameter, a choice based on heuristics that often results in suboptimal performance. Other approaches, such as SparseAdapter \citep{he2022sparseadapter}, dynamically adjust adapter sparsity during training, while AdaLoRA \citep{zhang2023adalora} gradually prunes singular values to reduce the number of updated parameters. Despite their efficiency during training, these methods do not yield inference-time benefits, as the underlying base model remains uncompressed and fully active.

\subsection{Memory efficient optimizers.}
 Another direction for memory reduction does not restrict the set of trainable parameters but instead optimizes the training methods, with notable examples including AdaRankGrad, GaLore, Fira, Flora, Adam-mini, GaLore-mini, LDAdam, GoLore, LoQT, Apollo, and SUMO \citep{ refael2025adarankgrad, zhao2024galore, zhu2024apollo, chen2024fira, Hao2024FloraLA, zhang2024adam, robert2025ldadama, loeschcke2024,refael2025sumo}, integrating low-rank gradient projections in optimization.

\subsection{Finetuning with Pruning}
Pruning refers to the process of reducing a model’s size by removing weights or structures deemed unnecessary for its performance. There are two main frameworks for pruning models during finetuning: \textit{structured} and \textit{unstructured}. Unstructured pruning \citep{sanh2020movement, fang2024maskllm} removes the least important parameters in a model without following any specific order. In contrast, structured pruning \citep{xia2022structured, zhao2024apt} eliminates entire blocks, rows, or columns from the weight matrices. Additionally, a post-training pruning method proposed by \citep{frantar2023sparsegpt} aims to prune finetuned models with minimal additional costs, but it requires initialization from fully finetuned models. In contrast, our approach jointly performs task adaptation and structured pruning during training, allowing the model to specialize while removing up to 40\% of the base parameters—without the need for full finetuning or post-hoc pruning. This results in a model that is both task-specific and resource-efficient, reducing inference cost with no additional training overhead.

\subsection{Finetuning with Adaptive Pruning}

A recent work that closely aligns with our goal was proposed by \citet{zhao2024apt}, where the base model parameters are pruned concurrently with adapter training. Their approach introduces additional low-rank matrices, similar in spirit to LoRA \citep{he2022sparseadapter}, which must also be trained. However, this design comes with substantial overhead: 5 times longer than standard full-model finetuning, and compression is achieved through a costly sorting-and-binary-search procedure over weight blocks. Moreover, because the rank of the low-rank adaptation weights is adaptive, the optimization memory footprint remains heavy, reaching about 70\% of that required for full-model finetuning. By contrast, our method introduces only lightweight gate vectors, which require orders of magnitude fewer parameters to optimize, incur negligible training-time overhead, and achieve compression without resorting to expensive search procedures. This makes our approach both more memory-efficient and more computationally practical than prior methods. In addition, the gates are trained end-to-end and integrate seamlessly into the training pipeline, making the approach simple to implement and deploy.

\section{Problem Formulation}
Assume we are given a pre-trained large language model $P_{\mathbf{\Theta}} (\mathbf{y}|\mathbf{x})$ parametrized by $\mathbf{\Theta}$ based on the Transformer architecture \citep{vaswani2017attention}. The goal of finetuning is to adapt this pre-trained model for downstream natural language understanding tasks, such as question answering or sentiment analysis. The downstream task is represented by a training dataset of context-target pairs: $\mathcal{Z} = \{(\mathbf{x}_i, \mathbf{y}_i)\}_{i\in[N]}$, where $\mathbf{x}_i$ is a sequence and $\mathbf{y}_i$ is a target label. For example, in the question-answering task (QQP) in the GLUE benchmark \citep{wang2019superglue}, $\mathbf{x}_i$ is a question, and $\mathbf{y}_i$ is its corresponding answer. 
\begin{figure*}[t]
\centering
\includegraphics[width=.75\linewidth]{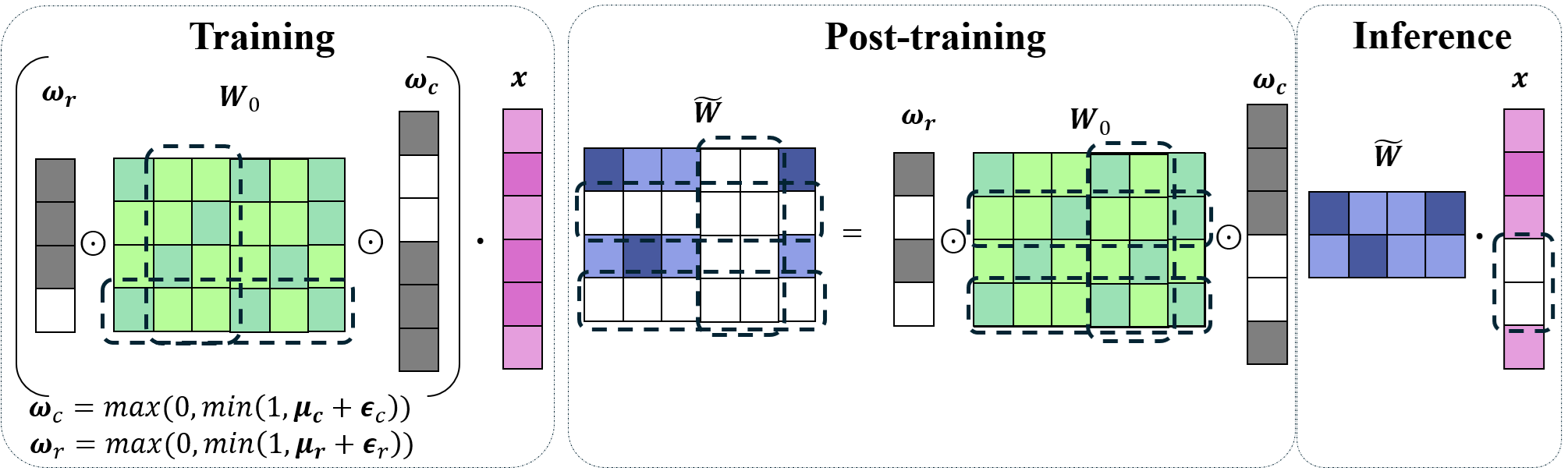}
\vskip -0.1in
\caption{ Overview of FineGates: Our method introduces structured sparsity in LM finetuning by training lightweight row and column gating vectors ($\bm{\omega}_c, \bm{\omega}_r$). These gates selectively retain the most informative weight dimensions, enabling efficient adaptation without modifying the base model’s parameters. Unlike LoRA and other PEFT methods, which introduce additional trainable matrices, FineGates directly optimizes sparsification and updates biases, thereby reducing memory overhead and inference time while maintaining task performance.
}
\label{fig:illusration}
\vskip -0.15in
\end{figure*}
To finetune the whole model parameters (full finetuning), the model is initialized to pre-trained weights $\mathbf{\Theta}_0$ and updated to $\mathbf{\Theta}_0 + \Delta \mathbf{\Theta}$ by repeatedly following the gradient updates to maximize the conditional language modeling objective:
\begin{equation}
    \max_{\mathbf{\Theta}} \sum_{(\mathbf{x},\mathbf{y}) \in \mathcal{Z}} \sum_{t=1}^{|\mathbf{y}|} \log (P_{\mathbf{\Theta}} (y_t|\mathbf{x},\mathbf{y}_{<t})).
\end{equation}

In low-rank adaptation methods the task-specific parameter update $\Delta \mathbf{\Theta} = \Delta \mathbf{\Theta} (\mathbf{\Gamma})$ is further encoded by a much smaller-sized set of parameters $\mathbf{\Gamma}$ with $|\mathbf{\Gamma}|  \ll |\mathbf{\Theta}_0|$. The task of finding $\Delta \mathbf{\Theta}$ thus becomes optimizing over $\mathbf{\Gamma}$ and not $\mathbf{\Theta}$,
\begin{equation}
    \max_{\mathbf{\Gamma}} \sum_{(\mathbf{x},\mathbf{y}) \in \mathcal{Z}} \sum_{t=1}^{|\mathbf{y}|} \log (P_{\mathbf{\Theta}_0 + \Delta \mathbf{\Theta}(\mathbf{\Gamma})} (y_t|\mathbf{x},\mathbf{y}_{<t})).
\end{equation}
While beneficial for preserving a common base model across tasks, this approach retains unnecessary information in the base model and still requires a forward pass through the large number of parameters during inference.

\paragraph{Finetuning by sparsification.} In this work, we propose incorporating \textit{gates} vectors \(\bm{\omega}_{r}, \bm{\omega}_{c}\) such that $\bm{\omega}_{r}$ multiplies the rows of the weight matrix and $\bm{\omega}_{c}$ multiplies its columns in an element-wise way, thereby sparsifying rows and columns. Importantly, the entries of these gate vectors are binary (\(0/1\)), directly enforcing structured sparsity. The parameters that construct these vectors for all adapted layers in the base model are denoted by  \(\bm{\Omega}\). This approach modifies the base model parameters by replacing \(\Delta \mathbf{\Theta}\) with our gate vectors. Our method introduces the concept of \textit{structured sparsity} in the base model \citep{wen2016learning}, aiming to exclude entire columns or rows from the weight matrices. Therefore, assuming that the number of compressed weight matrices in the base model is \(L\), the goal of the finetuning task becomes:

% {\fontsize{9.5pt}{9.5pt}\selectfont
\begin{align}\label{def::total_loss}
    \mathcal{L} = & \min_{\bm{\omega}}  \Bigg[- \sum_{(\mathbf{x},\mathbf{y}) \in \mathcal{Z}} \sum_{t=1}^{|\mathbf{y}|} \log \left(P_{\bm{\omega}_r \odot\mathbf{\Theta}_0 \odot \bm{\omega}_c} (y_t|\mathbf{x},\mathbf{y}_{<t})\right) + \nonumber \\
    &  + \lambda \frac{1}{L}\sum_{i=1}^L \left[ \max (||\bm{\omega}_r^i||_0, s_r^i ) + \max (||\bm{\omega}_c^i||_0,  s_c^i ) \right] \Bigg] ,
\end{align}
% }
where the parameter \(\lambda\) represents the magnitude of structured sparsity regularization. The term \(||\cdot||_0\) refers to the \(\ell_0\) norm, while \(s_r^i\) and \(s_c^i\) denote the target sparsity ratios we aim to achieve. These ratios are defined as the number of zero parameters divided by the total number of parameters in the weight matrix \(i\). Additionally, \(\odot\) signifies the element-wise product between the gates at specific indices and the corresponding column or row vectors at those same indices.

To clarify, structured sparsity is achieved by training two gate vectors, $\bm{\omega}_r,\bm{\omega}_c$, where each element scales an entire row or column of a given weight matrix. These gate vectors are optimized without any additional low-rank weight matrices. Once trained, the base model weights of the adapted layers are compressed by simply multiplying the gate vectors with the corresponding rows and columns of the weight matrix. This sparsification mechanism effectively reduces both memory and time complexity in the attention layers.

Our empirical results indicate that it is possible to reduce the model size while increasing its efficiency, and still provide accurate predictions, even when training only a small set of gates.

\section{The Method}

Consider the Transformer architecture \citep{vaswani2017attention} composed of $L$ blocks, and each block consists of a multi-head self-attention (MHA) layer and a feed-forward (FFN) layer. An MHA layer with $N_h$ heads takes an input $\mathbf{X} \in \mathbb{R}^{b \times t \times d}$, where $b$ is the batch size, $t$ is the sequence length (number of tokens), and $d$ is the embedding dimension. This input is either the output of the former encoder layer or the embedding layer. The MHA layer outputs:
\begin{equation*}
\text{MHA}(\mathbf{X}) = \sum_{i=1}^{N_h} \text{Att}[\mathbf{W}_q^{(i)}, (\mathbf{W}_k^{(i)},\mathbf{W}_v^{(i)},\mathbf{W}_o^{(i)}, \mathbf{X})],
\end{equation*}
where $\mathbf{W}_q,\mathbf{W}_k,\mathbf{W}_v$ and $\mathbf{W}_o$ refer to the query/key/value/output projection matrices, and $\text{Att}(\cdot)$ is an attention function. Following the attention head, the outputs are passed through the feed-forward layer, which consists of intermediate and output-projection layers, parameterized by $\mathbf{W}_{mlp}^i$ and $\mathbf{W}_{mlp}^o$:
\begin{equation*}
\text{FFN}(X)=\text{gelu}(\mathbf{X}\mathbf{W}_{mlp}^{i}) \mathbf{W}_{mlp}^o.
\end{equation*}
Denote by $\mathbf{W}_0 \in \mathbb{R}^{k \times d}$ a pre-trained weight matrix out of $\{ \mathbf{W}_q,\mathbf{W}_k,\mathbf{W}_v, \mathbf{W}_o, \mathbf{W}_{mlp}^i, \mathbf{W}_{mlp}^o\}$. To enforce structured sparsity of the matrix $\mathbf{W}_0$, we propose to multiply it by the learnable \textit{stochastic gates} vectors $\bm{\omega}_{r} \in \{0,1\}^{1 \times k}, \bm{\omega}_{c} \in \{0,1\}^{1 \times d}$ which are trained to converge into the binary representation. For simplicity, we denote by $\bm{\omega}$ a general gate vector applied to rows or columns. To approximate binary values for vector $\bm{\omega}$, we learn a representation $\bm{\mu} \in [-1,1] ^ {1 \times d}$ which is then converted to the approximate Bernoulli variables $\bm{\omega}$, by utilizing a Gaussian-based relaxation of the Bernoulli variables \citep{yamada2020feature,jana2023support}. The relaxation relies on the reparameterization trick \citep{miller2017reducing, figurnov2018implicit} and has been demonstrated to be effective in several applications \citep{lindenbaum2021l0,yang2023multi,naorhybrid}. During the training, the conversion is done by adding a random noise vector $\bm{\epsilon} \in \mathbb{R} ^{1 \times d}$ to the shifted by scalar $0.5$ vector $\bm{\mu}$ and clipping the values by the range of $[0,1]$, 
\begin{equation}
    \bm{\omega}(\bm{\mu}) = \max(0, \min(1, 0.5 + \bm{\mu} + \bm{\epsilon})), 
\end{equation}
where each value in the vector $\bm{\epsilon}$ is drawn from $\mathcal{N}(0, \sigma^2)$ and $\sigma=0.5$ is fixed throughout training. To encourage the model to produce a sparse $\bm{\omega}$ vector, it is trained with the regularization loss term constrained by the given sparsity ratio $s$ and is minimized during the training: 
\begin{equation}
     \mathcal{L}_{\text{sparse}}(\bm{\omega}) = \max (||\bm{\omega}||_0,  s ).
     \label{eq:sparsity}
\end{equation}
Assuming that $\bm{\omega}$ is a Bernoulli variable, we calculate its expected $\ell_0$ norm as follows (the full derivation appears in Appendix \ref{app:sparsity_loss}):
\begin{equation*}
\mathbb{E} ||\bm{\omega}||_0 = \frac{1}{d}\sum_j  \mathbb{P}(\omega_j > 0) = \frac{1}{d}\sum_j    \mathbb{P}(\mu_j + 0.5 + \epsilon_j > 0)= 
\end{equation*}
\begin{equation*}
= \frac{1}{d} \sum_j \left( \frac{1}{2} + \frac{1}{2} \erf \left(\frac{\mu_j + 0.5 }{\sqrt{2} \sigma}\right)\right).
\end{equation*}

When applying the sparsity term in Eq. \ref{eq:sparsity}, the model aims to sparsify the matrix $\mathbf{W}_0$ and retain only the essential parameters needed for the new task.

In addition, we rescale $\bm{\omega}$ by an additional score vector $\mathbf{k}$ in the sparsity loss term. This rescaling is designed to preserve vector outliers in the weight columns and rows, a crucial aspect for maintaining good performance in pruning and quantization tasks  \citep{zhao2024apt, lin2024awq, dettmers2022gpt3}:
\begin{equation}
\mathcal{L}_{\text{sparse}}'(\bm{\omega}) = \max ( ||\mathbf{k} \odot \bm{\omega}||_0, s),
     \label{eq:sparsity2}
\end{equation}
where the $\ell_0$ term is approximated by:
\begin{equation*}
\mathbb{E} ||\mathbf{k} \odot \bm{\omega}||_0 = \frac{1}{d} \sum_j k_j \left( \frac{1}{2} + \frac{1}{2} \erf \left( \frac{\mu_j + 0.5 }{\sqrt{2} \sigma}\right)\right),
\end{equation*}
and
\begin{equation}
    k_j = {e^{-\text{kurt}_j(\bm{O})}}/{\sum_j e^{-\text{kurt}_j(\bm{O})}},
    \label{eq:kurt}
\end{equation}
where $\text{kurt}_j(\cdot)$ in Eq.\ref{eq:kurt} is a Pearson’s kurtosis \citep{decarlo1997meaning} computed for each column $j$ in activations matrix
$\bm{O} =[[\bm{\omega}_r \odot \mathbf{W} \odot \bm{\omega}_c ] \mathbf{X}^T].$
More details can be found in the Appendix \ref{appx:kurt}. 
We require the model to close the gates (sparsify) for rows and columns with lower kurtosis and to attenuate the gate closing for rows and columns with higher kurtosis values.

Finally, assuming a latent representation is obtained by $\mathbf{H}= \mathbf{W}_0\mathbf{X}^T + \mathbf{b}$, our method's forward pass yields:
\begin{equation}
     \mathbf{H} =  [ \bm{\omega}_r \odot \mathbf{W}_0 \odot \bm{\omega}_c] \mathbf{X}^T + \bm{\omega}_r \odot \mathbf{b},
\end{equation}
where $\bm{\omega}_r, \bm{\omega}_c$ are trainable parameters, and $\mathbf{b}$ is a bias parameter. Our method is depicted in Figure \ref{fig:illusration}. At the start of training, we initialize the vectors $\bm{\omega}$  with all elements set to one.
During training, we optimize the parameters \(\bm{\omega}\) that are used to multiply the matrices \(\mathbf{W}_q\), \(\mathbf{W}_k\), \(\mathbf{W}_v\), \(\mathbf{W}_o\), and \(\mathbf{W}_{mlp}\). Additionally, we conduct experiments with an extended version that also trains the \(\mathbf{\Gamma}\) parameters. These parameters are used to assemble the matrices \(\mathbf{W}_A\) and \(\mathbf{W}_B\) for each layer, as proposed by \citep{hu2021lora}.

Our method\footnote{Our code is available on GitHub: \url{https://github.com/jsvir/FineGates}} introduces a simple yet powerful mechanism for training binary gates that explicitly select the most informative rows and columns of the weight matrices. This targeted sparsification not only preserves, but in many cases improves, the accuracy of finetuning tasks compared to existing approaches. At the same time, it delivers substantial parameter reduction in the base model, with only negligible performance loss. In the following section, we present a comprehensive empirical evaluation demonstrating the effectiveness and robustness of our approach.

\section{Theoretical Analysis}

\subsection{FineGates vs LoRA Landscape}
In Section \ref{sec:exps}, we provide empirical evidence of matching or superior performance of our method compared against LoRA, while requiring fewer trainable parameters. We provide the theoretical justification for such performance by comparing the optimization landscapes of LoRA and FineGates.

To that end, we consider a single layer and a smooth loss function (e.g., cross entropy or MSE) that satisfies the Polyak--\L{}ojasiewicz (PL), a mild assumption often used in optimization, which guarantees that gradient descent on $\bW$ converges linearly to a global optimum.

\begin{definition}[Polyak--\L{}ojasiewicz (PL) Condition]
    A function $F(\bW)$ satisfies PL condition with parameter $\beta > 0$ if:
    $\frac{1}{2}||\nabla F(\bW)||^2 \geq \beta (F(\bW)-F^*),$
    where $F^* = \min_{\bW} F(\bW)$.
\end{definition}

\begin{proposition}[FineGates vs LoRA optimization landscape ]
\label{prop:landscape}
Let $\bW_0\in\mathbb{R}^{m\times n}$ to be a weights matrix in a single linear layer trained with a smooth loss function $\mathcal{L}(\bW)$ 
that satisfies the PL condition. Let
\begin{itemize}
\item $W_{\text{gates}}(\bomega_r,\bomega_c) = Diag(\bomega_r) \bW_0 Diag(\bomega_c)$, where $\bomega_r \in\mathbb{R}^m, \bomega_c \in\mathbb{R}^n$ 
are trainable row/column gates.
\item $W_{\text{LoRA}}(\mathbf{A},\mathbf{B}) = \bW_0 + \mathbf{AB}^\top$ with $\mathbf{A}\in\mathbb{R}^{m\times k}, \mathbf{B}\in\mathbb{R}^{n\times k}$. \textbf{Then}
\end{itemize}
\begin{enumerate}
\item $\exists \beta_{\text{gates}} > 0$, s.t. $\forall (\bomega_r, \bomega_c)$, $\mathcal{L}(W_{\text{gates}}(\bomega_r,\bomega_c))$ satisfies PL condition with $\beta_{\text{gates}}$. 
\item $\not \exists \beta_{\text{LoRA}}$, s.t. $\mathcal{L}(W_{\text{LoRA}}(\mathbf{A},\mathbf{B}))$ holds PL condition with $\beta_{\text{LoRA}}$.

\end{enumerate}
\end{proposition}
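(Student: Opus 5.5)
The plan is to treat both parametrizations as compositions $F(\btheta)=\mathcal{L}(\Phi(\btheta))$ and to use the chain rule $\nabla_{\btheta}F = J_\Phi(\btheta)^\top \nabla\mathcal{L}(\Phi(\btheta))$. PL for $F$ then reduces to a lower bound on the smallest relevant singular value of the Jacobian $J_\Phi$, compared against the PL constant $\beta$ of $\mathcal{L}$. I would prove part 2 first, since it only needs a counterexample point, and then spend the real effort on part 1.

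\textbf{Part 2 (LoRA).} Write $\mathbf{G}=\nabla\mathcal{L}(\bW_0+\mathbf{AB}^\top)$. The chain rule gives $\nabla_{\mathbf{A}} = \mathbf{G}\mathbf{B}$ and $\nabla_{\mathbf{B}} = \mathbf{G}^\top\mathbf{A}$. At the standard LoRA initialization $(\mathbf{A},\mathbf{B})=(0,0)$ both gradients vanish identically. However, $\mathcal{L}(W_{\text{LoRA}}(0,0))=\mathcal{L}(\bW_0)$ is strictly larger than the infimum over $(\mathbf{A},\mathbf{B})$ whenever $\bW_0$ is not already optimal; I would assume this non-degenerate case, since otherwise no finetuning is needed. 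So the left side of the PL inequality is $0$ while the right side is strictly positive, and no $\beta_{\text{LoRA}}>0$ can work. To make the point robust, I would add that the same failure holds at every point with $\mathbf{A}=0,\ \mathbf{B}=0$. I would also note that near such saddles the ratio $\|\nabla F\|^2/(F-F^*)$ tends to $0$, which is the ``ill-conditioning'' claim. The non-convexity comes from the bilinear coupling of $\mathbf{A}$ and $\mathbf{B}$ through $\mathbf{G}$.

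\textbf{Part 1 (gates).} The map $\Phi(\bomega_r,\bomega_c)=\mathrm{Diag}(\bomega_r)\bW_0\mathrm{Diag}(\bomega_c)$ gives
\begin{equation*}
\partial_{\omega_{r,i}}F=\sum_j G_{ij}W_{0,ij}\,\omega_{c,j},\qquad \partial_{\omega_{c,j}}F=\sum_i G_{ij}W_{0,ij}\,\omega_{r,i}.
\end{equation*}
The steps are as follows.
\begin{enumerate}
\item Restrict to the feasible gate set that the clipping in the method actually produces: gates in a compact box $[\delta,1]$ with $\delta>0$.
\item Assume $\bW_0$ has no zero rows or columns.
\item On this set, bound $\|J_\Phi^\top \mathbf{G}\|^2\ge c\,\|\Pi(\mathbf{G})\|^2$, where $\Pi$ is the projection onto the tangent space of the gated family. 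The constant $c>0$ depends on $\delta$ and on $\min_i\|\bW_{0,i:}\|$ and $\min_j\|\bW_{0,:j}\|$. This follows by compactness and continuity of the smallest nonzero singular value of $J_\Phi$, whose rank is constant ($m+n-1$, because of the scaling symmetry $\bomega_r\to t\bomega_r$, $\bomega_c\to\bomega_c/t$).
\item Apply the PL inequality of $\mathcal{L}$ restricted to the gated manifold to get $\|\Pi(\mathbf{G})\|^2\ge 2\beta'(F-F^*_{\text{gates}})$.
\item Conclude with $\beta_{\text{gates}}=c\beta'$.
\end{enumerate}

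\textbf{Expected obstacle.} The main difficulty is step 4 together with the degenerate points. Read literally over all $(\bomega_r,\bomega_c)$, the statement fails at $\bomega=0$, for the same reason LoRA fails at $(0,0)$: the gradient vanishes there. So the argument must rely on keeping gates bounded away from zero. It must also rely on the smoothness of $\mathcal{L}$ to transfer PL from the full parameter space to the low-dimensional gated family. What I would try first is to assume, in the PL spirit, that $\mathcal{L}$ composed with the linear-in-each-block parametrization is PL in each block separately. I would then combine the two blocks using the explicit diagonal structure of $J_\Phi$ and the rank count above.
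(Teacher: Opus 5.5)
Your Part 2 is the paper's argument: both gradients vanish at $(\bA,\bB)=(0,0)$ while the loss sits above its infimum. The paper instantiates it with $\mathcal{L}(\bW)=\tfrac12\|\bW-\bW^\star\|_F^2$. Your non-degeneracy assumption is automatic for any PL loss once $\bW_0$ is not a global minimizer: then $\nabla\mathcal{L}(\bW_0)\neq 0$, and a small rank-one step against its top singular pair lowers the loss.

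For Part 1 you follow the paper's chain-rule skeleton, but more carefully. The paper asserts $\sigma_{\min}(J)\ge\gamma>0$ via Gershgorin, and then uses $\|J^\top g\|\ge\sigma_{\min}(J)\|g\|$ for all $g\in\mathbb{R}^{mn}$. Your kernel direction $(\bomega_r,-\bomega_c)$ refutes the first claim. The second fails on $\ker J^\top$, which is nontrivial once $m,n\ge 2$. Your step 3 with $\Pi$ is the right repair. (The rank equals $m+n$ minus the number of connected components of the bipartite support graph of $\bW_0$, so it is $m+n-1$ only if that graph is connected; but the rank is constant on the box, which is all compactness needs.) What this repair does, however, is move the entire difficulty into step 4, and step 4 does not follow from the hypotheses.

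\textbf{The gap.} PL of $\mathcal{L}$ on $\mathbb{R}^{m\times n}$ controls $\|\mathbf{G}\|$, not $\|\Pi(\mathbf{G})\|$. The gradient $\mathbf{G}$ can be entirely normal to the gated family at a point that is not optimal for $F$. Here is a concrete instance.

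Take $m=n=2$, $\bW_0=\mathbf{1}\mathbf{1}^\top$, and $\mathcal{L}(\bW)=\tfrac12\|\bW-\bW^\star\|_F^2$ with $\bW^\star=\begin{pmatrix}1&-1/5\\-1/5&1\end{pmatrix}$. Then $F(\bomega_r,\bomega_c)=\tfrac12\|\bomega_r\bomega_c^\top-\bW^\star\|_F^2$.

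At $\bomega_r=\bomega_c=\sqrt{2/5}\,\mathbf{1}$ the residual is $R=\tfrac35\begin{pmatrix}-1&1\\1&-1\end{pmatrix}$. Hence $\nabla F=(R\bomega_c,R^\top\bomega_r)=0$ and $F=0.72$. By contrast, $\bomega_r=\bomega_c=(1,\tfrac1{10})$ gives $F\approx 0.58$.

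So on every box $[\delta,1]^4$ with $\delta\le 1/10$ there is an interior point where $\Pi(\mathbf{G})=0$ but $F-F^*_{\text{gates}}>0$, and no $\beta'>0$ can exist. The same point also violates the statement read over all of $\mathbb{R}^m\times\mathbb{R}^n$, so the failure is not confined to $\bomega=0$.

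Your fallback assumption does not rescue this. For fixed $\bomega_c\neq 0$, the map $\bomega_r\mapsto F$ is a strongly convex quadratic, and symmetrically in $\bomega_c$. So blockwise PL holds in this example, yet the point above is only a block-coordinate minimum of a bilinear coupling.

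This example satisfies every hypothesis you impose, and also the paper's with $w_0=1$ and $\alpha\le 1/10$. Therefore Part 1 cannot be derived from PL of $\mathcal{L}$ at this level of generality. You would need an assumption that is essentially PL of the gated objective itself, or a restriction to a neighbourhood of a minimizer of $F$.
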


Compared to LoRA, the FineGates parameterization yields a substantially simpler optimization landscape. Each gate parameter directly scales a single row or column of the base weight matrix, which results in 
gradients that decompose into independent row- and column-wise statistics. This structure aligns well  
with the underlying loss geometry and, under mild assumptions such as smoothness and the PL condition, guarantees linear convergence of gradient-based training. 
In contrast, the LoRA parameterization introduces a bilinear factorization $\mathbf{AB}^\top$, where many 
different choices of $(\mathbf{A},\mathbf{B})$ correspond to the same effective update. This redundancy induces a 
$k^2$-dimensional family of flat directions in the loss surface, manifested as zero eigenvalues of the 
Hessian, and thereby weakens convergence guarantees. As a result, optimization with LoRA can be less 
stable and often requires additional normalization or careful tuning, while gating provides a more 
direct and well-conditioned optimization path. We provide the detailed proof in the Appendix \ref{app:landscape_proof}. In addition to the empirical results reported in Section~\ref{sec:exps}, we perform a separate experiment on the MRPC dataset to further validate our proposition. Figure~\ref{fig:convergence} shows the mean accuracy along with its standard deviation, computed after each training epoch while 
running both methods under identical hyperparameters across ten random initialization seeds.

\begin{figure}[t]
\centering
%\vskip -0.1in
\includegraphics[width=.99\linewidth]{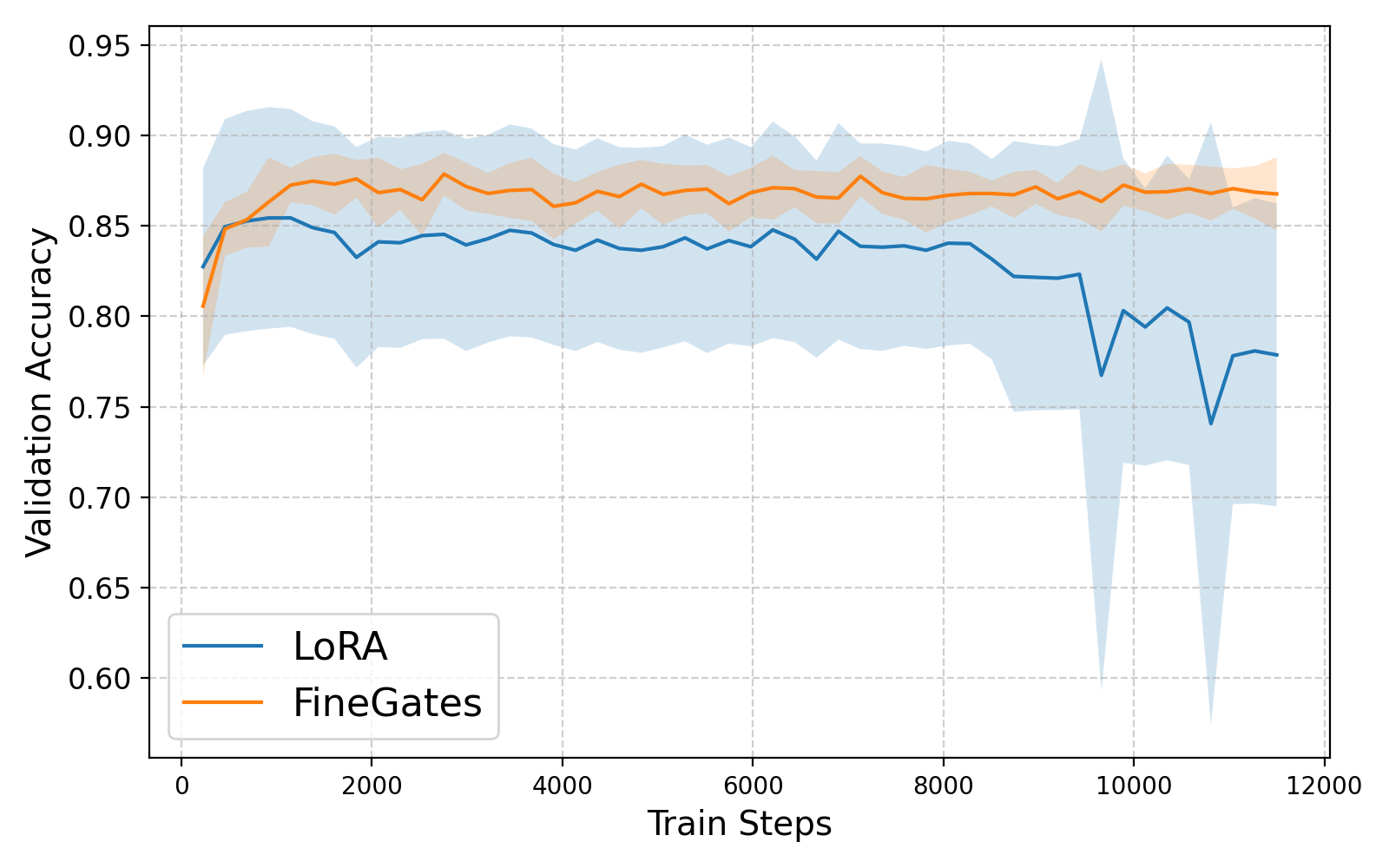}
%\vskip -0.1in
\caption{Validation accuracy trajectories of FineGates and LoRA on the MRPC dataset.} 
\label{fig:convergence}
%\vskip -0.1in
\end{figure}

\subsection{FineGates Convergence}
In this section, we present a proof of convergence for our method. This theoretical justification is necessary because including random noise in the gating mechanism could introduce challenges to the training process and affect convergence. To that end, we start by defining smoothness for functions parameterized by tensors.

\begin{definition}[$L$-continuity] A function $\boldsymbol{h}(\bbw)$ is a  $L$-continues, if for any $\bbw_1$ and $\bbw_2$, $\| \boldsymbol{h}\left(\bbw_1\right)  - \boldsymbol{h}\left(\bbw_2\right)\left\|_F \leq L\right\| \bbw_1-\bbw_2 \|_F$.
\end{definition}

\begin{lemma}[Convergence of FineGates]\label{prop:two-sided-gates}
Consider the FineGates loss function $\mathcal{L}(\bW,\bomega_{r},\bomega_{c})$ in (\ref{def::total_loss}), a network $\Phi(\cdot)$. Suppose that the composition $\mathcal{L} \equiv f \circ\Phi$ is a $L$-continuity, non-convex function, where $f$ is the loss (e.g cross entropy), then $\lim_{t\to\infty}\|\nabla \mathcal{L}(\bW_t,\bomega_{r,t},\bomega_{c,t})\|=0$. In particular, every accumulation point is a first-order stationary point of $\mathcal{L}.$
\end{lemma}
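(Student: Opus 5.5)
The plan is to read the statement as a standard descent result for gradient-type iterates on an $L$-smooth objective. Here ``$L$-continuity'' means the gradient $\nabla\mathcal{L}$ is $L$-Lipschitz in the joint variable $\btheta=(\bW,\bomega_r,\bomega_c)$, and $\mathcal{L}$ is bounded below. The lower bound is automatic: the cross-entropy term is nonnegative and the sparsity term $\max(\cdot,s)\ge s\ge 0$. I will take the iterates to be $\btheta_{t+1}=\btheta_t-\eta\,\nabla\mathcal{L}(\btheta_t)$ with step size $0<\eta<2/L$. The stochasticity from the Gaussian relaxation is handled by working with the expected objective $\mathbb{E}_{\bm\epsilon}\mathcal{L}$, in which the $\ell_0$ term is replaced by its smooth $\erf$ expression derived in Section~4.

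The first step is to record the descent lemma. Since $\nabla\mathcal{L}$ is $L$-Lipschitz,
\[
\mathcal{L}(\btheta_{t+1})\le \mathcal{L}(\btheta_t)+\langle\nabla\mathcal{L}(\btheta_t),\btheta_{t+1}-\btheta_t\rangle+\tfrac{L}{2}\|\btheta_{t+1}-\btheta_t\|^2 .
\]
Substituting the update gives $\mathcal{L}(\btheta_{t+1})\le\mathcal{L}(\btheta_t)-\eta(1-\tfrac{L\eta}{2})\|\nabla\mathcal{L}(\btheta_t)\|^2$, so the sequence $\mathcal{L}(\btheta_t)$ is monotonically nonincreasing. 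Telescoping from $0$ to $T$ and using $\mathcal{L}\ge \mathcal{L}^{\inf}$ yields
\[
\sum_{t=0}^{T}\|\nabla\mathcal{L}(\btheta_t)\|^2\le \frac{\mathcal{L}(\btheta_0)-\mathcal{L}^{\inf}}{\eta(1-L\eta/2)}<\infty
\]
uniformly in $T$. The terms of a convergent series tend to zero, so $\|\nabla\mathcal{L}(\btheta_t)\|\to0$.

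For the accumulation-point claim, suppose $\btheta_{t_j}\to\bar\btheta$. The gradient is Lipschitz and hence continuous, so $\nabla\mathcal{L}(\bar\btheta)=\lim_j\nabla\mathcal{L}(\btheta_{t_j})=0$, and $\bar\btheta$ is a first-order stationary point.

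The main obstacle is justifying smoothness of the gate part, which I would check in two places.
\begin{enumerate}
\item The clipping $\max(0,\min(1,\cdot))$ is not differentiable. In expectation over $\bm\epsilon\sim\mathcal{N}(0,\sigma^2)$, however, the clipped Gaussian mean is a smooth function of $\bm\mu$ with bounded derivatives, and the $\erf$ regularizer is smooth with Lipschitz gradient.
\item The outer $\max(\cdot,s)$ is nonsmooth at the threshold. I would either absorb it into the $L$-continuity hypothesis of the statement, or replace it by a smooth surrogate such as a softplus.
\end{enumerate}
If the noise is instead kept inside the iterates, I would fall back on a stochastic-gradient variant. That variant uses unbiased gradients with bounded variance and Robbins--Monro steps ($\sum\eta_t=\infty$, $\sum\eta_t^2<\infty$). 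The descent inequality then holds in expectation, the Robbins--Siegmund supermartingale theorem gives $\sum\eta_t\|\nabla\mathcal{L}\|^2<\infty$ almost surely, and the Lipschitz gradient upgrades the resulting $\liminf$ to a full limit.
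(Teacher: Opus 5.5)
Your proposal is correct and follows essentially the same argument as the paper. Both use the descent lemma with $\eta\in(0,2/L)$, monotone decrease of $\mathcal{L}$, and telescoping against the lower bound to make the squared gradient norms summable, so the gradients vanish and accumulation points are stationary by continuity of $\nabla\mathcal{L}$. The paper differs in one respect: it derives the Lipschitz-gradient property from that of $\nabla f$ through the chain-rule formulas for the gate gradients, valid only on a bounded set assumed to contain the iterates. You instead take this property directly as the hypothesis on the composition, and you also handle the clipping, the $\max(\cdot,s)$ kink and the gate noise, none of which the paper addresses.
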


The detailed proof of Proposition \ref{prop:two-sided-gates} can be found in Appendix \ref{app::proof}. The proof establishes the convergence of the loss function $\mathcal{L}(\bW,\bomega_r,\bomega_c)$ under gradient descent. First, it shows that the function's gradient is Lipschitz continuous, meaning that a constant multiple of parameter differences bounds the gradient differences. Then, it verifies that the regularization term is continuously differentiable and bounded, ensuring a smooth optimization process. Using the descent lemma, the proof demonstrates that the loss function decreases monotonically with each gradient step. Since the difference in loss is bounded below, the sum of gradient norms is shown to be finite, implying that the gradients approach zero. Consequently, gradient descent converges to a critical point where no further improvement is possible.

\section{Experiments}
\label{sec:exps}
\subsection{Experimental Setup and Datasets}

We assess the performance of our method on downstream tasks using the RoBERTa-base, RoBERTa-large models \citep{liu2019roberta}, and Llama3.2-1B, and we utilize the widely recognized GLUE benchmark \citep{wang2019superglue}. 

We simulate real-world scenarios where only limited labeled data is available for finetuning tasks due to the challenging nature of collecting accurate ground truth labels. In addition to the full GLUE benchmark, we conduct separate experiments on a limited GLUE dataset with up to 10,000 samples. Accordingly, we use the small-scale datasets (COLA, STSB, MRPC, RTE) in their entirety and select the first 10,000 labeled samples from the larger datasets (MNLI, QQP, QNLI, SST2). These experiments aim to determine whether the proposed method can compress base models with limited training data.

Moreover, we evaluate our method for base-model pruning without a specific target task. For these experiments, we apply our method during pre-training the Llama-1B model on the C4 dataset \citep{raffel2020exploring}. Additionally, our framework is trained with the Llama-2-7B backbone frozen and then evaluated on a zero-shot task. We report the median validation accuracy value for each experiment over five random initialization seeds. The number of trainable parameters (TP) excludes the classifier head, following the same setup as in previous works, e.g., \citet{hu2021lora}. In sparsified pre-training (Section \ref{sec:sparse_pretrain} and general model pruning experiments (Section \ref{sec:sparse_posttrain}), we report perplexity computed on the validation set from C4 
dataset. Additional technical details could be found in the Appendix.

\begin{table*}[t]
\begin{center}
\caption{Evaluation results on the GLUE benchmark demonstrate that our method outperforms other efficient finetuning approaches as well as full finetuning, while also enabling base model \textbf{pruning} and \textbf{inference speedup}.}
\resizebox{.8\linewidth}{!}{
\begin{tabular}{ll|llllllll|l} 
\hline
Method & TP & CoLA & STS-B & MRPC & RTE & SST2 & MNLI & QNLI & QQP & Avg.\\ 
\hline 
\multicolumn{10}{c}{\textbf{RoBERTa-Base}} \\
\hline
Full Finetune & 125M & 63.6 & 90.9 & 90.2 & 80.5 & 94.8 & 87.6 &92.8 & 91.9 & 86.5
\\
Galore \citep{zhao2024galore} & 125M & 60.3 & 90.7 & 92.2& 79.4& 94.0& 87.0& 92.2 &91.1& 85.9 \\
LoRA($r=4$) \citep{hu2021lora} & 0.7M & 64.0 & 90.9 & 89.7 & 83.4 & 94.4 & 87.6 & 92.7 & 91.0 & 86.6
\\
\hline
BitFit \citep{zaken2021bitfit} & 0.11M & 61.8 & 90.8 & \textbf{92.0} & 77.8 & 93.7 & 85.2 & 91.3 & 84.5 & 84.6 \\
VeRA \citep{kopiczko2023vera} & 0.04M & 65.6 & 90.7 & 89.5 & 78.7 & 94.6 & - & 91.8 & - & 85.2 \\
RoCoFT$_{\text{1-Row}}$ \citep{kowsher2024rocoft} & 0.08M & 60.2 & 90.7 & 87.7 &76.6 & 94.1 & 85.2 & 90.7 & 88.5 & 84.2 \\

VeLoRA \citep{miles2024velora} & 0.16M & 64.6 & 90.8& 91.3& 78.0 & 94.4& \textbf{86.3} &92.1 & \textbf{89.9} & 85.9 \\

\rowcolor[HTML]{c9f2c4} FineGates & 0.17M &\textbf{65.7} & \textbf{91.0} & 90.2 & \textbf{83.4} & \textbf{94.7} & 85.8 & \textbf{92.3} & 89.2 & \textbf{86.6 }\\ 
\hline
\multicolumn{10}{c}{\textbf{RoBERTa-Large}} \\
\hline
Full Finetune & 355M & 68.0 & 92.3 & 90.9 & 86.6 & 96.4 & 90.2 & 94.7 & 92.2 & 88.9 \\
LoRA($r=4$) \citep{hu2021lora} & 1.8M & 71.0 &92.3 & 90.7 & 89.5 & 96.4 &90.4 & 94.8 & 91.7 & 89.3
\\
\hline
LoRA-XS \citep{balazy2024lora} & 0.06K &  68.5 &  92.2 & \textbf{91.2} & 89.5 & 96.3 & - & 94.3 & - & 88.7 \\
VeRA \citep{kopiczko2023vera} & 0.06M & 68.0 & 91.7 &  90.9 & 85.9 & 96.1 & - & \textbf{94.4} & - & 87.8 \\
RoCoFT$_{\text{1-Row}}$ \citep{kowsher2024rocoft} & 0.22M & 65.7 & 91.8 & 90.0 & 85.3 & \textbf{96.6} & \textbf{90.7} & 94.2 & \textbf{90.2} & 88.1 \\

\rowcolor[HTML]{c9f2c4} FineGates & 0.4M & \textbf{71.4} & \textbf{92.3} & \textbf{91.2} &\textbf{ 90.2} & 96.0 & 89.1 & 94.1 & 89.4 & \textbf{89.2} \\
\hline
\end{tabular}}
\label{tab:glue_full}
\end{center}
\end{table*} 

\begin{table*}[t]
\begin{center}
\caption{This table compares FineGates with the APT method employing a RoBERTa-Base backbone. In our model, we set the rank to 1, as we are training a single column and a single row of gate vectors. 
}
    \resizebox{.65\linewidth}{!}{
    \begin{tabular}{llllc|ccccc} 
    \hline
    Method & Min $r$ & Max $r$ & TP & Sparsity & MRPC & SST2 & STSB & RTE & CoLA \\ 
    \hline
    APT & 2 & 4 & 0.17M & $\mathbf{40\%}$ & \textbf{86.0 }& 90.6  & \textbf{89.4} & \textbf{67.5} & \textbf{45.9} \\
    \rowcolor[HTML]{c9f2c4} FineGates & 1 & 1 & 0.17M & $\mathbf{40\%}$ &  84.8 & \textbf{91.2} & 88.1 & \textbf{67.5} &  40.6 \\
    \hline 
    \end{tabular}
    }
 \label{tab:sparsity_apt}
\end{center}
\end{table*}

\begin{table}[t]
\centering
\caption{Performance of our method vs. LoRA on GLUE tasks using Llama3.2-1B backbone.}
\resizebox{.99\linewidth}{!}{
\begin{tabular}{lc|ccccc}
\toprule
\textbf{Model} & TP & \textbf{CoLA} & \textbf{STS-B} & \textbf{MRPC} & \textbf{RTE} & \textbf{SST-2} \\
\midrule
LoRA($r=4$) & 2.2M & 66.1 & 90.1 & 87.7 & 83.7 & 95.9 \\
\rowcolor[HTML]{c9f2c4} FineGates & 0.9M  & 63.6 & 91.2 & 88.5 & 84.5 & 93.5 \\
\bottomrule
\end{tabular}}
\label{tab:llama_results}
\label{tab:llama1b_glue}
\end{table}

\begin{table*}[t]

\begin{center}
\caption{Finetuning accuracy on limited GLUE benchmark datasets. We present the accuracy results achieved by FineGates with various sparsity constraints: $s \geq 0\%$, $s \geq 10\%$, and $s \geq 20\%$. The number of removed parameters is shown in {\color{olive}{olive}}, and the relative change in accuracy is depicted in {\color{gray}{gray}} compared to full finetuning.
}

\resizebox{1.\linewidth}{!}{
\begin{tabular}{ll|llll|llll} 
\hline
\multicolumn{2}{c}{} & \multicolumn{4}{c}{Small} & \multicolumn{4}{c}{Limited Large}\\
Method & TP $\downarrow$ & CoLA & STS-B & MRPC & RTE & SST2 & MNLI & QNLI & QQP \\ 
\hline 
\multicolumn{10}{c}{\textbf{RoBERTa-Base}} \\
\hline
Full Finetune & 125M & 63.6 & 90.9 & 90.2 & 80.5 & 92.8 & 81.4 & 87.7 & 85.2  \\
FineGates, $s \geq 0\%$ & 0.17M & 65.7 & 91.0 & 90.2 & 83.4 & 94.0 & 81.3 & 89.1 & 84.9 \\
FineGates, $s \geq 10\%$ \color{olive}{(-12M)} & 0.17M & 65.2 ({\color{gray} +1.6\%}) & 90.7({\color{gray} -0.2\%}) & 89.2 ({\color{gray} -1\%}) & 81.3 ({\color{gray} +0.8\%})& 94.0 ({\color{gray} +1.2\%})& 80.6 ({\color{gray} -0.8\%}) & 88.8 ({\color{gray} +1.1\%}) & 84.3 ({\color{gray} -0.9\%})\\ 
FineGates, $s \geq20\%$ \color{olive}{(-25M)} & 0.17M & 61.4 ({\color{gray} -2.2\%}) & 90.5 ({\color{gray} -0.4\%})& 87.7 ({\color{gray} -2.5\%})& 78.0 ({\color{gray} -2.5\%})& 93.8 ({\color{gray} +1\%})& 79.4 ({\color{gray} -2\%})& 87.6 ({\color{gray} -0.1\%})& 83.7 ({\color{gray} -1.5\%})\\
\hline
\multicolumn{10}{c}{\textbf{RoBERTa-Large}} \\
\hline
Full Finetune & 355M & 68.0 & 92.3 & 90.9 & 86.6 & 93.5 & 85.9 & 92.4 & 85.8 \\
FineGates, $s \geq 0\%$ & 0.4M & 71.4 & 92.3  & 91.2 & 90.2 & 96.2 & 86.2 & 92.4 & 86.1 \\
FineGates, $s \geq10\%$ \color{olive}{(-35M)} & 0.4M & 69.5 ({\color{gray} +1.5\%})& 91.8 ({\color{gray} -0.5\%})& 90.4 ({\color{gray} -0.5\%})& 89.2 ({\color{gray} +2.6\%})& 96.1 ({\color{gray}  +2.6\%})& 85.0 ({\color{gray} -0.9\%})& 92.1 ({\color{gray} -0.3\%})& 86.1 ({\color{gray} +0.3\%})\\ 
FineGates, $s \geq20\%$ \color{olive}{(-70M)} & 0.4M & 68.1 ({\color{gray} +0.1\%})& 91.3 ({\color{gray} -1\%})& 90.2 ({\color{gray} -0.7\%})& 87.0 ({\color{gray} +0.4\%})& 95.3 ({\color{gray} +1.8\%})& 85.0 ({\color{gray} -0.9\%})& 92.0 ({\color{gray} -0.4\%})& 85.2 ({\color{gray} -0.6\%})\\ 
\hline
\end{tabular}}
%\vskip -0.1in

\label{tab:glue_combined}
\end{center}
%\vskip -0.1in
\end{table*} 
\subsection{Baselines}
We compare our method to full finetuning, in which the model is initialized to the pre-trained weights and biases, and all parameters undergo gradient updates, and several recently proposed efficient finetuning methods: LoRA \citep{hu2021lora} with rank$=4$, BitFit \citep{zaken2021bitfit}, VeRA \citep{kopiczko2023vera}, LoRA-XS \citep{balazy2024lora}, RoCoFT$_{\text{1-Row}}$ \citep{kowsher2024rocoft}, APT \citep{zhao2024apt}, and VeLoRA \citep{miles2024velora}. We provide additional descriptions for the baseline methods in the Appendix \ref{app:baselines}.

\subsection{Accuracy Results}

We present the accuracy results in Table \ref{tab:glue_full}. We report the overall (matched and mismatched) accuracy for MNLI, Matthew’s correlation for CoLA, Pearson correlation for STS-B, and accuracy for other tasks. 
From Table \ref{tab:glue_full}, it can be seen that our model is comparable to LoRA and full finetuning on average when applied to the RoBERTa-Base base model. In addition, our model outperforms full finetuning on average when applied to the RoBERTa-Large base model. Moreover, FineGates outperforms other efficient finetuning methods with both backbones. Table~\ref{tab:llama_results} presents a comparison between our method and LoRA using the modern Llama3.2-1B backbone. Despite utilizing fewer trainable parameters, our approach achieves higher accuracy than LoRA on most tasks.

Furthermore, our approach not only reduces the trainable parameter count but also compresses the base model itself, resulting in a $10-20\%$ reduction in its parameter count while maintaining an insignificant loss in accuracy (as shown in the last two rows of Table \ref{tab:glue_combined}). This highlights the efficiency and effectiveness of our method in balancing compression and performance. 

\subsection{Sparsification Results}

We present the sparsification results of FineGates in Figure \ref{fig:sparsity}. We report accuracy measurements for each sparsity level. It could be seen that our method can remove up to $20\%$ of parameters without significant loss in Matthews correlation for the CoLA dataset, up to $40\%$ of parameters trained on the SST2 dataset with only a loss of $4\%$ in accuracy, and up to $40\%$ for the STSB dataset with a loss of only $~3\%$ in Pearson Correlation metric. 

\subsection{Comparison to the APT method}
\label{sec:apt}
To compare our method with the recently proposed APT model, we conducted experiments on the MRPC, STSB, SST2, and RTE datasets. We obtained results for the APT model by running it with LoRA rank values ranging from 2 to 4, using hyperparameters similar to those in our method. We fixed the target sparsity at 40\% for both methods. The results are presented in Table \ref{tab:sparsity_apt}. FineGates achieves performance comparable to the APT model, without the computationally intensive pruning used by APT. We would also like to emphasize the simplicity of our implementation, which does not require the efficient search tools and additional dependencies needed by the APT method. 

\subsection{Inference Speedup}
\label{sec:inference_time}
We evaluate the inference time of the compressed model after each training epoch. For every epoch, validation wall-clock time is measured 10 times per run. To account for variability, training is repeated with 3 different random seeds, yielding a total of 30 inference time measurements per epoch. All validation measurements are conducted in CPU-only environments to isolate the benefits of structured pruning. Importantly, our reductions are obtained before any GPU-specific optimizations or quantization methods, underscoring the intrinsic efficiency gains of our approach, independent of hardware accelerators or low-level optimizations. In this practical setting, FineGates reduces the inference time of the adopted Llama3.2-1B model on the MRPC task by up to 25\%, with only a 4\% loss in accuracy.  More details are in Appendix \ref{app:inference_time}.

\subsection{General Model Sparsification}
\label{sec:sparse_posttrain}
Inspired by the recently proposed method for training unstructured and semi-structured pruning masks for the base model, denoted as MaskLLM \citep{fang2024maskllm}, we conduct an additional experiment to evaluate FineGates in this scenario. We train our method using the Llama-2-7B backbone with a limited number of training tokens (TT) and evaluate its performance on the C4 validation set.
Notably, our method requires only \textbf{2M} trainable parameters (TP) while MaskLLM requires an additional trainable matrix mask for each base model weight matrix. Moreover, MaskLLM proposes a semi-structured sparsity, which is beneficial for GPU execution, whereas our method achieves structured sparsity more suited to CPU inference. In this experiment, we train FineGates without bias and freeze the parameters of the original base model. The perplexity results are shown in Table \ref{tab:sparse_postrain}. The evaluation of all methods is done with a maximal sequence length of 256, which is different from the length of 4096, reported by \citet{fang2024maskllm}.
\begin{table}[t]
\centering
\caption{Validation perplexity of the sparsified Llama-2-7B.
}
\resizebox{.7\linewidth}{!}{
\begin{tabular}{lc|cccl}
\toprule

\textbf{Model} & \textbf{C4} & \textbf{TP} & \textbf{TT} & \textbf{Sparsity}\\
\midrule
Pretrained               & 7.76 &  6,738M & 2T & N/A\\
\midrule
MaskLLM          & 11.15 & 6,738M & 2B & 2:4  \\
\rowcolor[HTML]{c9f2c4} FineGates   & 11.75 & \textbf{2M} &  \textbf{82M} & \textbf{21\%}\\
\bottomrule
\end{tabular}}
\label{tab:sparse_postrain}
\end{table}

\subsection{Sparse Model Pre-training }
\label{sec:sparse_pretrain}

We also evaluate our method in the pre-training scenario, where the base model is trained from scratch with the gates. We follow the same setup as suggested by \citet{zhao2024galore} and train the Llama-1B base model with 1,340M parameters on the C4 dataset. We conduct a full finetuning of the base model (FFT) and its sparsified version (FineGates + FFT) for a limited number of training tokens (TT) and present the perplexity (PPL), number of inference parameters (IP), and number of added training parameters (ATP) in Table \ref{tab:sparse_pretrain}. By adding only $0.09\%$ parameters to the base model during pretraining, our method enables faster convergence of sparsified weights and structured pruning of 44\% (584M parameters) of the model's weights.  

\begin{table}[t]
\centering
\caption{Pre-training Llama-1B base model from scratch with FineGates under limited training budget. 
}
\resizebox{.8\linewidth}{!}{
\begin{tabular}{lrrrr}
\toprule
\textbf{Model} & \textbf{PPL} & \textbf{TT (B)} & \textbf{IP (M)} & \textbf{ATP (M)} \\
\midrule
FFT               & 342.75 & 1.9   & 1,340 & 0    \\
\rowcolor[HTML]{c9f2c4} FineGates + FFT   & \textbf{24.87}  & 1.9   & \textbf{756}   & 1.21 \\

\midrule
{\color{gray}FineGates + FFT } & {\color{gray}21.41 } & {\color{gray}4}   & {\color{gray}690}   & {\color{gray}1.21} \\
\bottomrule
\end{tabular}}
\label{tab:sparse_pretrain}
\end{table}

\section{Conclusion}

In this work, we propose \emph{FineGates}, a sparsification-based finetuning method for foundation language models (LMs). By learning binary stochastic gating vectors on weight matrices, FineGates introduces structured sparsity that can remove up to 40\% of parameters in attention layers while preserving accuracy. This approach offers a more efficient alternative to conventional low-rank adaptation methods, reducing inference costs without sacrificing expressiveness. Experiments on the GLUE benchmark show that FineGates matches or outperforms existing finetuning approaches, and we further provide a theoretical analysis of its convergence and optimization landscape. Beyond task-specific adaptation, we also demonstrate the effectiveness of FineGates as a pruning strategy during pre-training, enabling faster convergence and improved inference efficiency. Overall, our results highlight structured sparsification as a powerful mechanism for efficient and scalable LM adaptation. 

\bibliography{refs}

@inproceedings{
refael2025adarankgrad,
title={AdaRankGrad: Adaptive Gradient Rank and Moments for Memory-Efficient {LLM}s Training and Fine-Tuning},
author={Yehonathan Refael and Jonathan Svirsky and Boris Shustin and Wasim Huleihel and Ofir Lindenbaum},
booktitle={The Thirteenth International Conference on Learning Representations},
year={2025},
}

@article{zhu2024apollo,
  title={APOLLO: SGD-like Memory, AdamW-level Performance},
  author={Zhu, Hanqing and Zhang, Zhenyu and Cong, Wenyan and Liu, Xi and Park, Sem and Chandra, Vikas and Long, Bo and Pan, David Z and Wang, Zhangyang and Lee, Jinwon},
  journal={arXiv preprint arXiv:2412.05270},
  year={2024}
}

@article{chen2024fira,
  title={Fira: Can We Achieve Full-rank Training of LLMs Under Low-rank Constraint?},
  author={Chen, Xi and Feng, Kaituo and Li, Changsheng and Lai, Xunhao and Yue, Xiangyu and Yuan, Ye and Wang, Guoren},
  journal={arXiv preprint arXiv:2410.01623},
  year={2024}
}

@article{Hao2024FloraLA,
  title={Flora: Low-Rank Adapters Are Secretly Gradient Compressors},
  author={Yongchang Hao and Yanshuai Cao and Lili Mou},
  journal={ArXiv},
  year={2024},
  volume={abs/2402.03293},
  url={https://api.semanticscholar.org/CorpusID:267412117}
}

@article{zhang2024adam,
  title={Adam-mini: Use fewer learning rates to gain more},
  author={Zhang, Yushun and Chen, Congliang and Li, Ziniu and Ding, Tian and Wu, Chenwei and Ye, Yinyu and Luo, Zhi-Quan and Sun, Ruoyu},
  journal={arXiv preprint arXiv:2406.16793},
  year={2024}
}

@misc{robert2025ldadama,
      title={LDAdam: Adaptive Optimization from Low-Dimensional Gradient Statistics}, 
      author={Thomas Robert and Mher Safaryan and Ionut-Vlad Modoranu and Dan Alistarh},
      year={2025},
      eprint={2410.16103},
      archivePrefix={arXiv},
      primaryClass={cs.LG},
      url={https://arxiv.org/abs/2410.16103}, 
}

@article{naorhybrid,
  title={Hybrid Autoencoders for Tabular Data: Leveraging Model-Based Augmentation in Low-Label Settings},
  author={Naor, Erel and Lindenbaum, Ofir},
    journal={The Thirty-ninth Annual Conference on Neural Information Processing Systems},
    year={2025}
}

@misc{loeschcke2024,
      title={LoQT: Low-Rank Adapters for Quantized Pretraining}, 
      author={Sebastian Loeschcke and Mads Toftrup and Michael J. Kastoryano and Serge Belongie and Vésteinn Snæbjarnarson},
      year={2024},
      eprint={2405.16528},
      archivePrefix={arXiv},
      primaryClass={cs.LG},
      url={https://arxiv.org/abs/2405.16528}, 
}

@misc{refael2025sumo,
      title={SUMO: Subspace-Aware Moment-Orthogonalization for Accelerating Memory-Efficient LLM Training}, 
      author={Yehonathan Refael and Guy Smorodinsky and Tom Tirer and Ofir Lindenbaum},
      year={2025},
      eprint={2505.24749},
      archivePrefix={arXiv},
      primaryClass={cs.LG},
      url={https://arxiv.org/abs/2505.24749}, 
}

@misc{refael2025lorenza,
      title={LORENZA: Enhancing Generalization in Low-Rank Gradient LLM Training via Efficient Zeroth-Order Adaptive SAM}, 
      author={Yehonathan Refael and Iftach Arbel and Ofir Lindenbaum and Tom Tirer},
      year={2025},
      eprint={2502.19571},
      archivePrefix={arXiv},
      primaryClass={cs.LG},
      url={https://arxiv.org/abs/2502.19571}, 
}

@misc{refael2024learningklevelstructuredsparse,
      title={Learning k-Level Structured Sparse Neural Networks Using Group Envelope Regularization}, 
      author={Yehonathan Refael and Iftach Arbel and Wasim Huleihel},
      year={2024},
      eprint={2212.12921},
      archivePrefix={arXiv},
      primaryClass={cs.LG},
      url={https://arxiv.org/abs/2212.12921}, 
}

@inproceedings{yang2023multi,
  title={Multi-modal differentiable unsupervised feature selection},
  author={Yang, Junchen and Lindenbaum, Ofir and Kluger, Yuval and Jaffe, Ariel},
  booktitle={Uncertainty in Artificial Intelligence},
  pages={2400--2410},
  year={2023},
  organization={PMLR}
}

@article{rozner2024knowledge,
  title={Knowledge Editing in Language Models via Adapted Direct Preference Optimization},
  author={Rozner, Amit and Battash, Barak and Wolf, Lior and Lindenbaum, Ofir},
  journal={arXiv preprint arXiv:2406.09920},
  year={2024}
}

@article{hu2021lora,
  title={Lora: Low-rank adaptation of large language models},
  author={Hu, Edward J and Shen, Yelong and Wallis, Phillip and Allen-Zhu, Zeyuan and Li, Yuanzhi and Wang, Shean and Wang, Lu and Chen, Weizhu},
  journal={arXiv preprint arXiv:2106.09685},
  year={2021}
}

@article{zhao2024galore,
  title={Galore: Memory-efficient llm training by gradient low-rank projection},
  author={Zhao, Jiawei and Zhang, Zhenyu and Chen, Beidi and Wang, Zhangyang and Anandkumar, Anima and Tian, Yuandong},
  journal={arXiv preprint arXiv:2403.03507},
  year={2024}
}

@article{vaswani2017attention,
  title={Attention is all you need},
  author={Vaswani, A},
  journal={Advances in Neural Information Processing Systems},
  year={2017}
}

@article{wen2016learning,
  title={Learning structured sparsity in deep neural networks},
  author={Wen, Wei and Wu, Chunpeng and Wang, Yandan and Chen, Yiran and Li, Hai},
  journal={Advances in neural information processing systems},
  volume={29},
  year={2016}
}

@inproceedings{yamada2020feature,
  title={Feature selection using stochastic gates},
  author={Yamada, Yutaro and Lindenbaum, Ofir and Negahban, Sahand and Kluger, Yuval},
  booktitle={International Conference on Machine Learning},
  pages={10648--10659},
  year={2020},
  organization={PMLR}
}

@article{miller2017reducing,
  title={Reducing reparameterization gradient variance},
  author={Miller, Andrew and Foti, Nick and D'Amour, Alexander and Adams, Ryan P},
  journal={Advances in Neural Information Processing Systems},
  volume={30},
  year={2017}
}

@article{figurnov2018implicit,
  title={Implicit reparameterization gradients},
  author={Figurnov, Mikhail and Mohamed, Shakir and Mnih, Andriy},
  journal={Advances in neural information processing systems},
  volume={31},
  year={2018}
}

@article{jana2023support,
  title={Support recovery with Projected Stochastic Gates: Theory and application for linear models},
  author={Jana, Soham and Li, Henry and Yamada, Yutaro and Lindenbaum, Ofir},
  journal={Signal Processing},
  volume={213},
  pages={109193},
  year={2023},
  publisher={Elsevier}
}

@inproceedings{lindenbaum2021l0,
  title={L0-sparse canonical correlation analysis},
  author={Lindenbaum, Ofir and Salhov, Moshe and Averbuch, Amir and Kluger, Yuval},
  booktitle={International Conference on Learning Representations},
  year={2021}
}

@article{liu2019roberta,
  title={Roberta: A robustly optimized bert pretraining approach},
  author={Liu, Y},
  journal={arXiv preprint arXiv:1907.11692},
  year={2019}
}

@article{wang2019superglue,
  title={Superglue: A stickier benchmark for general-purpose language understanding systems},
  author={Wang, Alex and Pruksachatkun, Yada and Nangia, Nikita and Singh, Amanpreet and Michael, Julian and Hill, Felix and Levy, Omer and Bowman, Samuel},
  journal={Advances in neural information processing systems},
  volume={32},
  year={2019}
}

@article{loshchilov2017decoupled,
  title={Decoupled weight decay regularization},
  author={Loshchilov, Ilya and Hutter, Frank},
  journal={arXiv preprint arXiv:1711.05101},
  year={2017}
}

@article{balazy2024lora,
  title={LoRA-XS: Low-Rank Adaptation with Extremely Small Number of Parameters},
  author={Ba{\l}azy, Klaudia and Banaei, Mohammadreza and Aberer, Karl and Tabor, Jacek},
  journal={arXiv preprint arXiv:2405.17604},
  year={2024}
}

@article{lin2024nora,
  title={NoRA: Nested Low-Rank Adaptation for Efficient Fine-Tuning Large Models},
  author={Lin, Cheng and Li, Lujun and Li, Dezhi and Zou, Jie and Luo, Wenhan and Xue, Wei and Guo, Yike},
  journal={arXiv preprint arXiv:2408.10280},
  year={2024}
}

@article{li2022parameter,
  title={Parameter-efficient sparsity for large language models fine-tuning},
  author={Li, Yuchao and Luo, Fuli and Tan, Chuanqi and Wang, Mengdi and Huang, Songfang and Li, Shen and Bai, Junjie},
  journal={arXiv preprint arXiv:2205.11005},
  year={2022}
}

@article{xu2023qa,
  title={Qa-lora: Quantization-aware low-rank adaptation of large language models},
  author={Xu, Yuhui and Xie, Lingxi and Gu, Xiaotao and Chen, Xin and Chang, Heng and Zhang, Hengheng and Chen, Zhensu and Zhang, Xiaopeng and Tian, Qi},
  journal={arXiv preprint arXiv:2309.14717},
  year={2023}
}

@article{chavan2023one,
  title={One-for-all: Generalized lora for parameter-efficient fine-tuning},
  author={Chavan, Arnav and Liu, Zhuang and Gupta, Deepak and Xing, Eric and Shen, Zhiqiang},
  journal={arXiv preprint arXiv:2306.07967},
  year={2023}
}

@article{zhang2023lora,
  title={Lora-fa: Memory-efficient low-rank adaptation for large language models fine-tuning},
  author={Zhang, Longteng and Zhang, Lin and Shi, Shaohuai and Chu, Xiaowen and Li, Bo},
  journal={arXiv preprint arXiv:2308.03303},
  year={2023}
}

@article{kopiczko2023vera,
  title={Vera: Vector-based random matrix adaptation},
  author={Kopiczko, Dawid Jan and Blankevoort, Tijmen and Asano, Yuki Markus},
  journal={arXiv preprint arXiv:2310.11454},
  year={2023}
}

@article{lin2024awq,
  title={AWQ: Activation-aware Weight Quantization for On-Device LLM Compression and Acceleration},
  author={Lin, Ji and Tang, Jiaming and Tang, Haotian and Yang, Shang and Chen, Wei-Ming and Wang, Wei-Chen and Xiao, Guangxuan and Dang, Xingyu and Gan, Chuang and Han, Song},
  journal={Proceedings of Machine Learning and Systems},
  volume={6},
  pages={87--100},
  year={2024}
}

@article{bondarenko2024low,
  title={Low-Rank Quantization-Aware Training for LLMs},
  author={Bondarenko, Yelysei and Del Chiaro, Riccardo and Nagel, Markus},
  journal={arXiv preprint arXiv:2406.06385},
  year={2024}
}

@article{ma2023llm,
  title={Llm-pruner: On the structural pruning of large language models},
  author={Ma, Xinyin and Fang, Gongfan and Wang, Xinchao},
  journal={Advances in neural information processing systems},
  volume={36},
  pages={21702--21720},
  year={2023}
}

@article{xia2022structured,
  title={Structured pruning learns compact and accurate models},
  author={Xia, Mengzhou and Zhong, Zexuan and Chen, Danqi},
  journal={arXiv preprint arXiv:2204.00408},
  year={2022}
}

@article{hinton2015distilling,
  title={Distilling the Knowledge in a Neural Network},
  author={Hinton, Geoffrey},
  journal={arXiv preprint arXiv:1503.02531},
  year={2015}
}

@inproceedings{houlsby2019parameter,
  title={Parameter-efficient transfer learning for NLP},
  author={Houlsby, Neil and Giurgiu, Andrei and Jastrzebski, Stanislaw and Morrone, Bruna and De Laroussilhe, Quentin and Gesmundo, Andrea and Attariyan, Mona and Gelly, Sylvain},
  booktitle={International conference on machine learning},
  pages={2790--2799},
  year={2019},
  organization={PMLR}
}

@article{pfeiffer2020adapterfusion,
  title={Adapterfusion: Non-destructive task composition for transfer learning},
  author={Pfeiffer, Jonas and Kamath, Aishwarya and R{\"u}ckl{\'e}, Andreas and Cho, Kyunghyun and Gurevych, Iryna},
  journal={arXiv preprint arXiv:2005.00247},
  year={2020}
}

@article{lester2021power,
  title={The power of scale for parameter-efficient prompt tuning},
  author={Lester, Brian and Al-Rfou, Rami and Constant, Noah},
  journal={arXiv preprint arXiv:2104.08691},
  year={2021}
}

@article{li2021prefix,
  title={Prefix-tuning: Optimizing continuous prompts for generation},
  author={Li, Xiang Lisa and Liang, Percy},
  journal={arXiv preprint arXiv:2101.00190},
  year={2021}
}

@article{he2022sparseadapter,
  title={Sparseadapter: An easy approach for improving the parameter-efficiency of adapters},
  author={He, Shwai and Ding, Liang and Dong, Daize and Zhang, Miao and Tao, Dacheng},
  journal={arXiv preprint arXiv:2210.04284},
  year={2022}
}

@article{zhang2023adalora,
  title={AdaLoRA: Adaptive budget allocation for parameter-efficient fine-tuning},
  author={Zhang, Qingru and Chen, Minshuo and Bukharin, Alexander and Karampatziakis, Nikos and He, Pengcheng and Cheng, Yu and Chen, Weizhu and Zhao, Tuo},
  journal={arXiv preprint arXiv:2303.10512},
  year={2023}
}

@article{zhao2024apt,
  title={Apt: Adaptive pruning and tuning pretrained language models for efficient training and inference},
  author={Zhao, Bowen and Hajishirzi, Hannaneh and Cao, Qingqing},
  journal={arXiv preprint arXiv:2401.12200},
  year={2024}
}

@article{sanh2020movement,
  title={Movement pruning: Adaptive sparsity by fine-tuning},
  author={Sanh, Victor and Wolf, Thomas and Rush, Alexander},
  journal={Advances in neural information processing systems},
  volume={33},
  pages={20378--20389},
  year={2020}
}

@inproceedings{frantar2023sparsegpt,
  title={Sparsegpt: Massive language models can be accurately pruned in one-shot},
  author={Frantar, Elias and Alistarh, Dan},
  booktitle={International Conference on Machine Learning},
  pages={10323--10337},
  year={2023},
  organization={PMLR}
}

@article{refael2024adarankgrad,
  title={AdaRankGrad: Adaptive Gradient-Rank and Moments for Memory-Efficient LLMs Training and Fine-Tuning},
  author={Refael, Yehonathan and Svirsky, Jonathan and Shustin, Boris and Huleihel, Wasim and Lindenbaum, Ofir},
  journal={arXiv preprint arXiv:2410.17881},
  year={2024}
}

@article{dettmers2022gpt3,
  title={Gpt3. int8 (): 8-bit matrix multiplication for transformers at scale},
  author={Dettmers, Tim and Lewis, Mike and Belkada, Younes and Zettlemoyer, Luke},
  journal={Advances in Neural Information Processing Systems},
  volume={35},
  pages={30318--30332},
  year={2022}
}

@article{zaken2021bitfit,
  title={Bitfit: Simple parameter-efficient fine-tuning for transformer-based masked language-models},
  author={Zaken, Elad Ben and Ravfogel, Shauli and Goldberg, Yoav},
  journal={arXiv preprint arXiv:2106.10199},
  year={2021}
}

@article{kowsher2024rocoft,
  title={RoCoFT: Efficient Finetuning of Large Language Models with Row-Column Updates},
  author={Kowsher, Md and Esmaeilbeig, Tara and Yu, Chun-Nam and Soltanalian, Mojtaba and Yousefi, Niloofar},
  journal={arXiv preprint arXiv:2410.10075},
  year={2024}
}

@article{decarlo1997meaning,
  title={On the meaning and use of kurtosis.},
  author={DeCarlo, Lawrence T},
  journal={Psychological methods},
  volume={2},
  number={3},
  pages={292},
  year={1997},
  publisher={American Psychological Association}
}

@article{gordon2020compressing,
  title={Compressing bert: Studying the effects of weight pruning on transfer learning},
  author={Gordon, Mitchell A and Duh, Kevin and Andrews, Nicholas},
  journal={arXiv preprint arXiv:2002.08307},
  year={2020}
}

@article{fang2024maskllm,
  title={Maskllm: Learnable semi-structured sparsity for large language models},
  author={Fang, Gongfan and Yin, Hongxu and Muralidharan, Saurav and Heinrich, Greg and Pool, Jeff and Kautz, Jan and Molchanov, Pavlo and Wang, Xinchao},
  journal={Advances in Neural Information Processing Systems},
  volume={37},
  pages={7736--7758},
  year={2024}
}

@article{raffel2020exploring,
  title={Exploring the limits of transfer learning with a unified text-to-text transformer},
  author={Raffel, Colin and Shazeer, Noam and Roberts, Adam and Lee, Katherine and Narang, Sharan and Matena, Michael and Zhou, Yanqi and Li, Wei and Liu, Peter J},
  journal={Journal of machine learning research},
  volume={21},
  number={140},
  pages={1--67},
  year={2020}
}

@article{miles2024velora,
  title={Velora: Memory efficient training using rank-1 sub-token projections},
  author={Miles, Roy and Reddy, Pradyumna and Elezi, Ismail and Deng, Jiankang},
  journal={Advances in Neural Information Processing Systems},
  volume={37},
  pages={42292--42310},
  year={2024}
}

@inproceedings{ceritli2024study,
  title={A study of parameter efficient fine-tuning by learning to efficiently fine-tune},
  author={Ceritli, Taha and Ozkan, Savas and Min, Jeongwon and Noh, Eunchung and Min, Cho and Ozay, Mete},
  booktitle={Findings of the Association for Computational Linguistics: EMNLP 2024},
  pages={15819--15836},
  year={2024}
}

@article{wang2024lora,
  title={Lora-ga: Low-rank adaptation with gradient approximation},
  author={Wang, Shaowen and Yu, Linxi and Li, Jian},
  journal={Advances in Neural Information Processing Systems},
  volume={37},
  pages={54905--54931},
  year={2024}
}

@inproceedings{lei2024fast,
  title={Fast randomized low-rank adaptation of pre-trained language models with pac regularization},
  author={Lei, Zijian and Qian, Dong and Cheung, William},
  booktitle={Findings of the Association for Computational Linguistics ACL 2024},
  pages={5236--5249},
  year={2024}
}

@inproceedings{zhang-etal-2025-parameter,
    title = "Parameter-Efficient Fine-Tuning of Large Language Models via Deconvolution in Subspace",
    author = "Zhang, Jia-Chen  and
      Xiong, Yu-Jie  and
      Xia, Chun-Ming  and
      Zhu, Dong-Hai  and
      Qiu, Xi-He",
    editor = "Rambow, Owen  and
      Wanner, Leo  and
      Apidianaki, Marianna  and
      Al-Khalifa, Hend  and
      Eugenio, Barbara Di  and
      Schockaert, Steven",
    booktitle = "Proceedings of the 31st International Conference on Computational Linguistics",
    month = jan,
    year = "2025",
    address = "Abu Dhabi, UAE",
    publisher = "Association for Computational Linguistics",
    url = "https://aclanthology.org/2025.coling-main.265/",
    pages = "3924--3935"
}

@article{salas1999gershgorin,
  title={Gershgorin's theorem for matrices of operators},
  author={Salas, Hector N},
  journal={Linear algebra and its applications},
  volume={291},
  number={1-3},
  pages={15--36},
  year={1999},
  publisher={Elsevier}
}

\appendix
\onecolumn
\aistatstitle{Train Less, Infer Faster: Efficient Model Finetuning and Compression \\ via Structured Sparsity: Supplementary Materials}
\thispagestyle{empty}
\section{Expected $\ell_0$ norm}
\label{app:sparsity_loss}

\begin{equation*}
\mathbb{E} ||\bm{\omega}||_0 = \frac{1}{d}\sum_j  \mathbb{P}(\omega_j > 0) = \frac{1}{d}\sum_j    \mathbb{P}(\mu_j + 0.5 + \epsilon_j > 0)= 
\end{equation*}
\begin{equation*}
=\frac{1}{d}\sum_j    (1 - \mathbb{P}(\mu_j + 0.5 + \epsilon_j \leq 0)) = 
\end{equation*}
\begin{equation*}
=\frac{1}{d}\sum_j  \left(1 - \Phi \left(\frac{-\mu_j - 0.5}{\sigma} \right) \right) = 
\end{equation*}
\begin{equation*}
 =\frac{1}{d}\sum_j 1 - \frac{1}{2} \left(1 + \erf \left(-\frac{\mu_j + 0.5 }{\sqrt{2} \sigma}\right)\right) = 
 \end{equation*}
\begin{equation*}
= \frac{1}{d} \sum_j \left( \frac{1}{2} + \frac{1}{2} \erf \left(\frac{\mu_j + 0.5 }{\sqrt{2} \sigma}\right)\right).
\end{equation*}

\section{Proof of Proposition \ref{prop:landscape}}
\label{app:landscape_proof}

\paragraph{Setup and notation.}
Let $L:\mathbb{R}^{m\times n}\to\mathbb{R}_+$ be differentiable and satisfy the Polyak–Łojasiewicz (PL) inequality on a set $\mathcal{W}\subseteq\mathbb{R}^{m\times n}$ with parameter $\beta>0$:
\begin{equation}\label{eq:PL-in-W}
\frac{1}{2}\,\|\nabla_{\bW} L(\bW)\|_F^2 \;\ge\; \beta\big(L(\bW)-L^\star\big)\qquad \forall\,\bW\in\mathcal{W},
\end{equation}
where $L^\star=\inf_{\bW\in\mathcal{W}}L(\bW)$.
Write $\mathrm{vec}(\cdot)$ for vectorization and use the Euclidean/Frobenius norms throughout.

\paragraph{Two-sided gates parametrization.}
Fix a base matrix $\bW_0\in\mathbb{R}^{m\times n}$. For row/column gates $\bomega_r\in\mathbb{R}^m$, $\bomega_c\in\mathbb{R}^n$, define
\[
W(\bomega_r,\bomega_c)\;=\;\mathrm{Diag}(\bomega_r)\,\bW_0\,\mathrm{Diag}(\bomega_c),
\]
i.e., $\bW_{ij}=\bomega_{r,i}\,\bW_{0,ij}\,\bomega_{c,j}$. Consider the feasible set
\[
\Omega \;=\;\Big\{(\bomega_r,\bomega_c)\in\{0,1\}^m\times\{0,1\}^n:\ \|\bomega_r\|_0\ge 1- s,\ \|\bomega_c\|_0\ge 1 - s\Big\},
\]
where $s$ is the sparsity level and let $\mathcal{W}_\Omega:=\{W(\bomega_r,\bomega_c):(\bomega_r,\bomega_c)\in\Omega\}$.

We also consider its natural continuous relaxation $\widehat{\Omega}$:
\[
\widehat{\Omega}\;=\;\Big\{(\bomega_r,\bomega_c)\in[{\alpha},1]^m\times[{\alpha},1]^n:\ \|\bomega_r\|_0\ge 1 - s,\ \|\bomega_c\|_0\ge 1 - s\Big\}\!,
\]
with any fixed ${\alpha}\in(0,1]$ (so no active row/column is exactly zero during the first iteration of training).

\paragraph{Assumptions.}
\begin{enumerate}
\item \label{assump:W0-lowerbd} There exist active index sets $S_r\subseteq[m]$, $S_c\subseteq[n]$ with $|S_r|\ge 1 -s$, $|S_c|\ge 1 -s$ such that on $S_r\times S_c$ the base weights are bounded away from zero:
\[
\min_{i\in S_r,\; j\in S_c}\; |\bW_{0,ij}|\;\ge\; {w}_0 \;>\; 0.
\]
(Equivalently, $\bW_0$ has no vanishing entries on any row/column that the constraint allows to be active.)
\item \label{assump:domain} We optimize over the relaxed feasible set $\widehat{\Omega}$ (piecewise-smooth; the discrete case is covered as a limit).
\end{enumerate}

\begin{lemma}[Chain-rule lower bound via Jacobian]\label{lem:chain}
Let $F(\btheta):=L\!\big(W(\btheta)\big)$ where $\btheta:=(\bomega_r,\bomega_c)\in\widehat{\Omega}$. If, on $\widehat{\Omega}$,
\[
\sigma_{\min}\big(J(\btheta)\big)\;\ge\;\gamma \;>\; 0
\qquad\text{where } J(\btheta):=\frac{\partial\,\mathrm{vec}(W(\btheta))}{\partial \btheta}\in\mathbb{R}^{mn\times(m+n)},
\]
then $F$ satisfies the PL inequality on $\widehat{\Omega}$ with parameter $\beta\gamma^2$:
\[
\frac{1}{2}\,\|\nabla_{\btheta} F(\btheta)\|_2^2 \;\ge\; \beta\,\gamma^2 \,\big(F(\btheta)-L^\star\big), \qquad \forall\,\btheta\in\widehat{\Omega}.
\]
\emph{Proof.} By the chain rule, $\nabla_{\btheta} F(\btheta)= J(\btheta)^\top \nabla_{\mathrm{vec}(W)} L(W(\btheta))$. Hence
\[
\|\nabla_{\btheta} F(\btheta)\|_2 \;=\; \|J(\btheta)^\top \nabla_{\mathrm{vec}(W)} L\|_2
\;\ge\; \sigma_{\min}(J(\btheta))\,\|\nabla_{\mathrm{vec}(W)} L\|_2.
\]
Squaring, using $\sigma_{\min}(J)\ge\gamma$, the Frobenius/Euclidean equivalence, and \eqref{eq:PL-in-W} gives the claim. \qed
\end{lemma}

\begin{lemma}[Uniform lower bound on the Jacobian for gates]\label{lem:J-lower}
Under \ref{assump:W0-lowerbd}–\ref{assump:domain}, there exists $\gamma>0$ depending only on $(s,\underline{\alpha},\underline{w}_0)$ such that
$\sigma_{\min}(J(\btheta))\ge \gamma$ for all $\btheta\in\widehat{\Omega}$.
\end{lemma}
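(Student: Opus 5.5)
The plan is to bound $\sigma_{\min}(J(\btheta))^2=\lambda_{\min}\big(J(\btheta)^\top J(\btheta)\big)$ from below by a Gershgorin-type argument on the $(m+n)\times(m+n)$ Gram matrix $G(\btheta):=J^\top J$, in the spirit of \citet{salas1999gershgorin}. I would use Assumption~\ref{assump:W0-lowerbd} to control the diagonal of $G$ and the box constraint $\btheta\in[\alpha,1]^{m+n}$ from Assumption~\ref{assump:domain} to control the off-diagonal mass.

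\textbf{Step 1: write $J$ and $G$ explicitly.} Since $W_{ij}=\omega_{r,i}W_{0,ij}\omega_{c,j}$, the only nonzero partial derivatives are
\[
\partial W_{ij}/\partial\omega_{r,i}=W_{0,ij}\omega_{c,j},\qquad \partial W_{ij}/\partial\omega_{c,j}=\omega_{r,i}W_{0,ij}.
\]
This gives $G$ a $2\times2$ block structure:
\begin{itemize}
\item the row--row block is diagonal, with entries $D^r_i=\sum_j W_{0,ij}^2\omega_{c,j}^2$;
\item the column--column block is diagonal, with entries $D^c_j=\sum_i W_{0,ij}^2\omega_{r,i}^2$;
\item the cross block has entries $C_{ij}=W_{0,ij}^2\,\omega_{r,i}\,\omega_{c,j}\ge 0$.
\end{itemize}
So $G=\begin{pmatrix}D^r & C\\ C^\top & D^c\end{pmatrix}$ with $D^r,D^c$ diagonal.

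\textbf{Step 2: lower-bound the diagonal uniformly.} For an active row $i\in S_r$, Assumption~\ref{assump:W0-lowerbd} together with $\omega_{c,j}\ge\alpha$ gives $D^r_i\ge |S_c|\,\alpha^2 w_0^2$. Symmetrically, $D^c_j\ge|S_r|\,\alpha^2w_0^2$ for $j\in S_c$. Rows and columns outside $S_r,S_c$ are the ones the sparsity constraint allows to close. I would restrict $J$ to the coordinates indexed by $S_r\cup S_c$, since those are the active parameters, so that every diagonal entry is at least $\delta:=(1-s)\alpha^2w_0^2$.

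\textbf{Step 3: Gershgorin margin.} Gershgorin gives $\lambda_{\min}(G)\ge\min\big(\min_i(D^r_i-\sum_j C_{ij}),\ \min_j(D^c_j-\sum_i C_{ij})\big)$. The row-$i$ margin equals $\sum_j W_{0,ij}^2\,\omega_{c,j}(\omega_{c,j}-\omega_{r,i})$, and the column-$j$ margin is analogous. I would then bound these margins below by a constant depending only on $(s,\alpha,w_0)$ and set $\gamma^2$ equal to that constant. This is the step I expect to be the main obstacle, because the margin is not automatically positive.

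The difficulty is structural. The vector $v=(\bomega_r,-\bomega_c)$ satisfies $Jv=\mathrm{vec}(W)-\mathrm{vec}(W)=0$, because $W$ is invariant under $(\bomega_r,\bomega_c)\mapsto(t\bomega_r,\bomega_c/t)$. A plain Gershgorin bound therefore cannot hold on all of $\widehat{\Omega}$ without extra input.

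What I would try first is to use the feasible set itself to remove this direction. The box $[\alpha,1]$ and the $\ell_0$ constraints define the admissible perturbations. I would bound $v^\top Gv$ only over feasible directions $v$, meaning those in the tangent cone of $\widehat{\Omega}$ compatible with the sparsity constraint. I would then check whether, in that setting, the rescaling direction is excluded, so that the margin computed in Step 3 is at least $\alpha^2 w_0^2\,\min(|S_r|,|S_c|)\cdot c(\alpha)$.

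If the bound survives, Lemma~\ref{lem:chain} then yields part~1 of Proposition~\ref{prop:landscape} with $\beta_{\text{gates}}=\beta\gamma^2$. If it does not, the argument shows that the statement must be read on the quotient by this rescaling symmetry, i.e.\ with $\sigma_{\min}$ replaced by the smallest nonzero singular value of $J$.
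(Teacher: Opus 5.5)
\paragraph{Assessment.} Your Steps 1--3 are the paper's proof: the same block form of $G=J^\top J$, the same diagonal lower bound on the active sets, and the same appeal to Gershgorin. The paper simply asserts that Gershgorin gives a lower bound ``dependent only on the diagonal values,'' which is not what Gershgorin says.

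The obstruction you found shows that no argument can work. For every $\btheta\in\widehat{\Omega}$ the vector $v=(\bomega_r,-\bomega_c)$ is nonzero, since every gate is at least $\alpha>0$, and $J(\btheta)v=0$. Hence $\sigma_{\min}(J(\btheta))=0$ identically on $\widehat{\Omega}$, and the lemma is false as stated. Consistently, since $G(\btheta)$ is singular, $0$ lies in one of its Gershgorin discs, so at every point some margin in your Step~3 is $\le 0$. At the initialization $\bomega_r=\mathbf{1}$, $\bomega_c=\mathbf{1}$, all of the margins vanish.

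The genuine gap in your proposal is that you treat this as an obstacle to work around rather than as a counterexample. As a result it ends with neither a proof nor a clean refutation.

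\paragraph{The proposed repairs.} None of the repairs you sketch closes the gap.
\begin{enumerate}
\item \emph{Feasible directions.} Restricting to feasible directions changes the quantity, because $\sigma_{\min}(J)$ minimizes over all of $\mathbb{R}^{m+n}$. It also fails on its own terms: at any interior point of $[\alpha,1]^{m+n}$ both $\pm v$ are admissible, and the $\ell_0$ constraints are vacuous on the box because no gate can vanish.
\item \emph{Column restriction.} Restricting $J$ to the columns indexed by $S_r\cup S_c$ goes the wrong way, because deleting columns can only increase the smallest singular value. Moreover, Assumption~\ref{assump:W0-lowerbd} says nothing about $\bW_0$ outside $S_r\times S_c$, so a zero row $i\notin S_r$ already makes the $\omega_{r,i}$-column of $J$ vanish.
\item \emph{Quotient reading.} Replacing $\sigma_{\min}$ by the smallest nonzero singular value is not uniform in $(s,\alpha,w_0)$ under that assumption either. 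For $m=2$, $n=1$, $\bW_0=(w_0,\epsilon)^\top$ and $S_r=S_c=\{1\}$, the smallest nonzero singular value of $J$ is $O(\epsilon)$.
\end{enumerate}

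\paragraph{What is true instead.} The quotient version does hold if $|\bW_{0,ij}|\ge w_0$ for \emph{all} $(i,j)$, but the tool is not Gershgorin. Write $W=\mathrm{Diag}(\bomega_r)\bW_0\mathrm{Diag}(\bomega_c)$ and substitute $\delta_{r,i}=\omega_{r,i}x_i$, $\delta_{c,j}=\omega_{c,j}y_j$. This gives $\|J\delta\|^2=\sum_{i,j}W_{ij}^2(x_i+y_j)^2$, which is the quadratic form of a weighted signless Laplacian of $K_{m,n}$. Its kernel is spanned by $(\mathbf{1},-\mathbf{1})$, and its second-smallest eigenvalue is at least $\alpha^4w_0^2\min(m,n)$. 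The substitution has operator norm at most $1$, so the smallest nonzero singular value of $J$ is at least $\alpha^2w_0\sqrt{\min(m,n)}$.

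Even this would not give part~1 through Lemma~\ref{lem:chain} as you plan. For a tall $J$, the bound $\|J^\top g\|\ge\sigma\|g\|$ holds only for $g\in\mathrm{range}(J)$.
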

\begin{proof}
Write $\bW_{ij}=\bomega_{r,i}\,\bW_{0,ij}\,\bomega_{c,j}$. The partials are
\[
\frac{\partial \bW_{ij}}{\partial \bomega_{r,i}}=\bW_{0,ij}\,\bomega_{c,j},\qquad
\frac{\partial \bW_{ij}}{\partial \bomega_{c,j}}=\bomega_{r,i}\,\bW_{0,ij},\qquad
\text{others }0.
\]
Thus $J(\btheta)$ has two block-columns corresponding to $(\bomega_r,\bomega_c)$ whose nonzeros on $S_r\times S_c$ are bounded below in magnitude by $\underline{w}_0\,\underline{\alpha}$.
Moreover, the $mn\times(m+n)$ Gram matrix $G(\btheta):=J(\btheta)^\top J(\btheta)$ is block-structured:
\[
G(\btheta)=
\begin{bmatrix}
\mathrm{Diag}\!\big(g_r(\bomega_c)\big) & H(\btheta)\\
H(\btheta)^\top & \mathrm{Diag}\!\big(g_c(\bomega_r)\big)
\end{bmatrix},
\]
where $g_r(\bomega_c)_i=\sum_{j} (W_{0,ij}\bomega_{c,j})^2$ and $g_c(\bomega_r)_j=\sum_{i}(W_{0,ij}\bomega_{r,i})^2$.
On the active sets, $g_r(\bomega_c)_i\ge s\,\underline{w}_0^2\,\underline{\alpha}^2$ and
$g_c(\bomega_r)_j\ge s\,\underline{w}_0^2\,\underline{\alpha}^2$.
By Gershgorin theorem \citep{salas1999gershgorin}, the minimal eigenvalue of $G(\btheta)$ is bounded below by a constant $c(s,\underline{\alpha},\underline{w}_0)>0$ independent of $\btheta\in\widehat{\Omega}$ (dependent only on the diaogonal values). Hence $\sigma_{\min}(J(\btheta))=\sqrt{\lambda_{\min}(G(\btheta))}\ge \gamma:=\sqrt{c}>0$. \end{proof}

\begin{proposition}[PL is preserved under two-sided gates]\label{prop:PL-gates}
Since L satisfies PL condition, the composed objective
\[
F(\bomega_r,\bomega_c)\;:=\; L\!\big(\mathrm{Diag}(\bomega_r)\,\bW_0\,\mathrm{Diag}(\bomega_c)\big)
\]
satisfies the PL inequality on $\widehat{\Omega}$ with parameter $\beta\,\gamma^2$, where $\gamma>0$ is the uniform Jacobian lower bound from Lemma~\ref{lem:J-lower}.
\emph{Proof.} Combine Lemmas~\ref{lem:chain} and \ref{lem:J-lower}. \qed
\end{proposition}

\paragraph{Why the LoRA parametrization does \emph{not} preserve PL.}
Consider the same base loss $L$ but the LoRA map $W(\bA,\bB)=\bW_0+\bA\bB$ with $\bA\in\mathbb{R}^{m\times r},\,\bB\in\mathbb{R}^{r\times n}$.
Even if $L$ is PL in $W$ on $\mathbb{R}^{m\times n}$, the composed
\[
F_{\text{LoRA}}(\bA, \bB)\;:=\;L(\bW_0+\bA \bB)
\]
\emph{fails} the PL inequality in general.

\begin{proposition}[Counterexample for LoRA]\label{prop:lora-not-PL}
Let $L(\bW)=\tfrac12\|\bW-\bW^\star\|_F^2$, which satisfies the PL inequality on $\mathbb{R}^{m\times n}$ with parameter $\beta=1$. If $\bW^\star\neq \bW_0$, then $F_{\mathrm{LoRA}}(\bA, \bB)=\tfrac12\|\bW_0+\bA \bB-\bW^\star\|_F^2$ does not satisfy the PL inequality on $\mathbb{R}^{m\times r}\times\mathbb{R}^{r\times n}$.
\emph{Proof.}
At $(\bA,\bB)=(0,0)$ we have $\nabla_A F_{\mathrm{LoRA}}=(\bW_0+\bA \bB -\bW^\star)\bB^\top=0$ and $\nabla_B F_{\mathrm{LoRA}}=\bA^\top(\bW_0+\bA \bB-\bW^\star)=0$, so $\|\nabla F_{\mathrm{LoRA}}(0,0)\|_F=0$.
But $F_{\mathrm{LoRA}}(0,0)=\tfrac12\|\bW_0-\bW^\star\|_F^2>0$ by $\bW^\star\neq \bW_0$.
The PL inequality would force $\|\nabla F\|^2\ge 2\beta(F-F^\star)$, which is violated here since the left-hand side is $0$ while the right-hand side is positive (note $F^\star\ge 0$).
Hence $F_{\mathrm{LoRA}}$ is not PL. \qed
\end{proposition}

\section{Sparsification Results}
Sparsification-Accuracy trade-off measured on CoLA, SST2 (full), STSB, and MRPC datasets with RoBERTa-Base, RoBERTa-Large, and Llama3.2-1B backbones is shown in Figure \ref{fig:sparsity}.
\begin{figure*}[h]
\centering
\includegraphics[width=.24\linewidth]{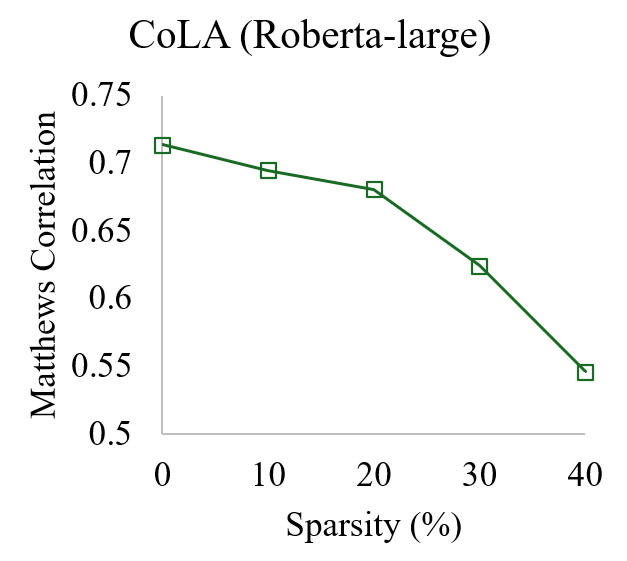}
\includegraphics[width=.24\linewidth]{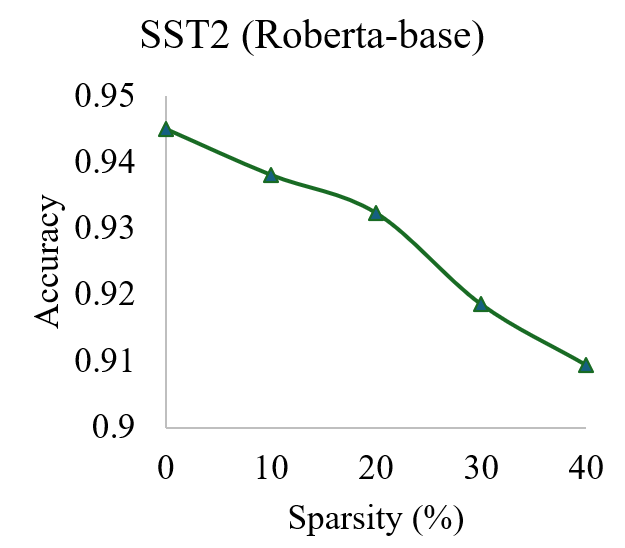}
\includegraphics[width=.24\linewidth]{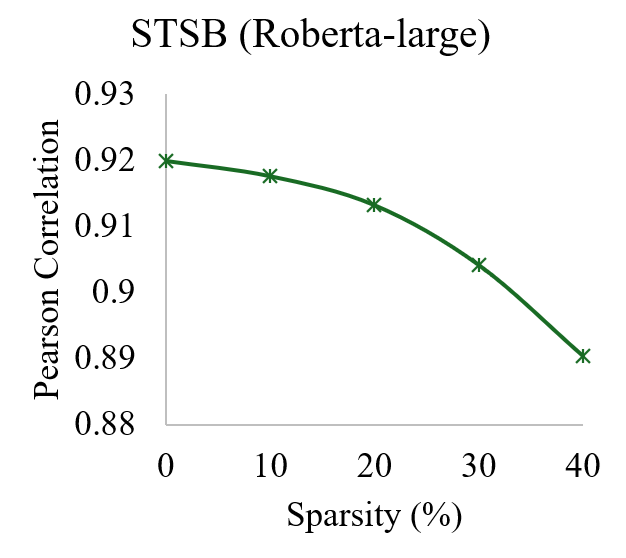}
\includegraphics[width=.24\linewidth]{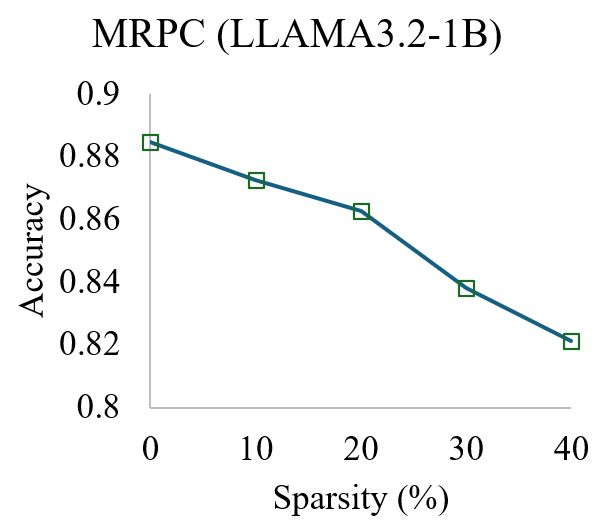}
\vskip -0.1in
\caption{Sparsification-Accuracy trade-off measured on CoLA, SST2 (full), STSB, and MRPC datasets with RoBERTa-Base, RoBERTa-Large, and Llama3.2-1B backbones. Our model provides $>\mathbf{40\%}$ of structured sparsity while sacrificing only $\mathbf{4\%}$ of accuracy compared to the model without sparsification on the SST2 dataset, where we train $\bm{\omega}_r, \bm{\omega}_c$ with total $166K$ parameters. On CoLA, the method reduces up to $20\%$ of parameters without significant loss in accuracy, and $40\%$ on the STSB dataset with only $3\%$ drop in accuracy. The method removes up to $~470M$ parameters from the Llama3.2-1B base model with only $6\%$ accuracy loss.
} 
\label{fig:sparsity}
% \vskip -0.2in
\end{figure*}

\section{Convergence Experiment}
We train both FineGates and LoRA on MRPC dataset for 50 epochs with a batch size of 16, learning rate $1-e10^{-3}$. In Figure \ref{fig:convergence} we show validation accuracy. We repeat the experiment 10 times and compute mean$\pm$std values after every epoch.

\section{FineGates Modifications}

We conduct additional experiments to examine whether adapting fewer projection matrices in the transformer layers significantly affects the performance of our approach. Furthermore, we investigate the effect of extending our model with additional low-rank weights \(\mathbf{W}_A\) and \(\mathbf{W}_B\) for each adapted layer, following the design proposed by \citep{hu2021lora}. Table \ref{tab:glue_combined_ablation} reports results for three model variants on RoBERTa-Base and RoBERTa-Large: FineGates w/o $\mathbf{W}_{mlp}$ that does not adapt intermediate and output projections for RoBERTa layers,  $\text{FineGates}$ w/ $\textbf{W}_B\textbf{W}_A$ adds low-rank matrices $\mathbf{W}_{B},\mathbf{W}_{A}$ with $r=8$ and $\text{FineGates}$ w $\textbf{W}_B\textbf{W}_A$ but w/o $\mathbf{W}_{mlp}$ which combines the two modifications. From this experiment, we draw three main conclusions: (1) the base FineGates model, where only gates are trained, achieves accuracy comparable to the other variants; (2) the number of trainable parameters can be reduced by up to 4 times without significantly compromising accuracy (FineGates w/o $\mathbf{W}_{mlp}$); and (3) introducing low-rank parameter matrices yields additional accuracy gains.

\begin{table*}[h]
\begin{center}
\caption{Modifications of FineGates. The evaluation is done for the next versions of FineGates: (1) training without gates on feed-forward layers (w/o $\mathbf{W}_{mlp}$), (2) training with additional low-rank parameters (w/ $\mathbf{W}_B \mathbf{W}_A$), (3) training with low-rank parameters (w/ $\mathbf{W}_B\mathbf{W}_A$) but excluding feed-forward layers (w/o $\mathbf{W}_{mlp}$) matrices from adaptation.}
\resizebox{.8\linewidth}{!}{
\begin{tabular}{ll|llll|llll} 
\hline
\multicolumn{2}{c}{} & \multicolumn{4}{c}{Small} & \multicolumn{4}{c}{Sub-sampled Large}\\
Method & TP $\downarrow$ & CoLA & STS-B & MRPC & RTE & SST2 & MNLI & QNLI & QQP \\ 
\hline 
\multicolumn{10}{c}{\textbf{Roberta-Base}} \\
\hline
FineGates & 0.17M & 65.7 & 91.0  & 90.0 & 83.4 & 94.0 & 81.3 & 89.1 & 84.9 \\
FineGates w/o $\mathbf{W}_{mlp}$ & 0.04M & 65.1 & 90.9  & \textbf{90.4} & 80.5 & 94.2 & 81.0 & 89.2 & 84.7 \\
\hline
FineGates w/ $\textbf{W}_B\textbf{W}_A$ & 1.4M & \textbf{66.5} &\textbf{91.2} & 90.2 & \textbf{83.8} & \textbf{94.3} & 81.4 & 89.4 & 84.8 \\
FineGates w/ $\textbf{W}_B\textbf{W}_A$ w/o $\mathbf{W}_{mlp}$ & 0.6M & 65.8 & 90.7 & 89.7 & 82.3 & \textbf{94.3} & \textbf{81.7} & 89.7 & \textbf{85.2} \\
\hline
\multicolumn{10}{c}{\textbf{Roberta-Large}} \\
\hline
FineGates & 0.4M & \textbf{71.4} & 92.3  & 91.2 & 90.2 & \textbf{96.2} & 86.2 & 92.4 & 86.1 \\
FineGates w/o $\mathbf{W}_{mlp}$ & 0.1M & 70.5 & 92.0  & 91.4 &\textbf{90.6} & 96.1 & 86.6 & 92.3 & 86.4 \\
\hline
FineGates w/ $\textbf{W}_B\textbf{W}_A$ & 3.8M & 70.1 & 92.2 & 90.6 & 88.1 & 95.4 & 86.1 & 92.1 & 85.5 \\
FineGates w/ $\textbf{W}_B\textbf{W}_A$ w/o $\mathbf{W}_{mlp}$ & 1.7M & 69.9 & \textbf{92.6} & \textbf{91.9} & 88.5 & \textbf{96.2} & \textbf{87.2} & \textbf{92.7} & \textbf{86.7} \\
\hline
\hline
\end{tabular}}

\label{tab:glue_combined_ablation}
\end{center}
\end{table*}

\section{Inference Time Measurements}
\label{app:inference_time}
We evaluate the inference time of the compressed model after each training epoch. For each epoch, the wall-clock time of the validation phase is measured 10 times per run. To account for variability, we train with three different random seeds, resulting in a total of 30 inference time measurements per epoch. All validation-time measurements are performed on CPU only. Our experiments use the MRPC dataset with the LLaMA-3.2-1B backbone. To assess inference efficiency, we compute the average validation time over a 5-epoch window and report the relative reduction in inference time using the first epoch as a reference. Specifically, the reduction is defined as $(t_0 - t_i)/t_0$, where $t_i$ denotes the average validation time after epoch $i$, aggregated over 30 measurements (3 seeds × 10 repetitions per seed). We report timing results alongside validation accuracy (averaged over three seeds) in Figure~\ref{fig:time_accuracy_mrpc}, and in relation to the number of pruned parameters in Figure~\ref{fig:time_params_mrpc}.

\begin{figure*}[h]
\centering
\includegraphics[width=.85\linewidth]{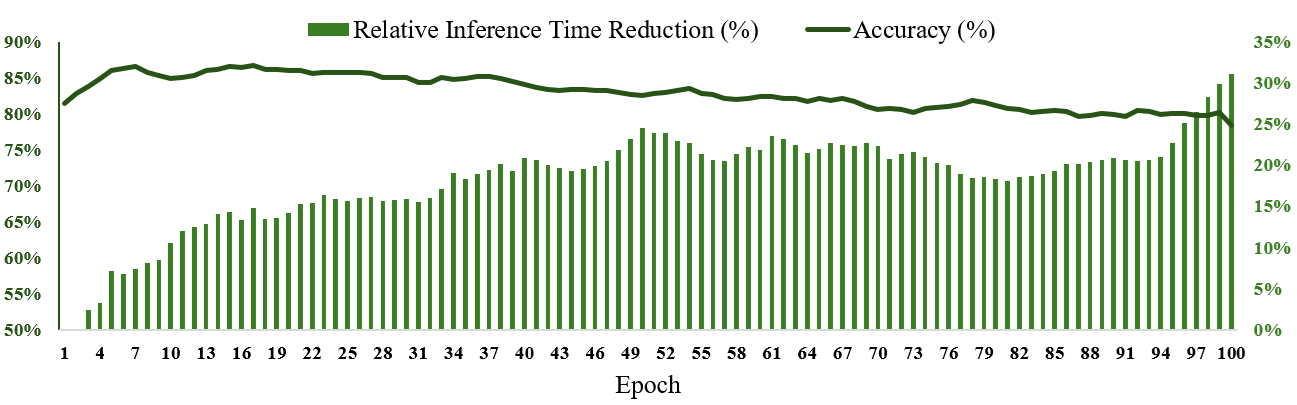}
\caption{CPU Inference time measurements along with validation accuracy averaged across 3 seeds for a single validation epoch of MRPC dataset.}
\label{fig:time_accuracy_mrpc}
\end{figure*}

\begin{figure*}[h]
\centering
\includegraphics[width=.85\linewidth]{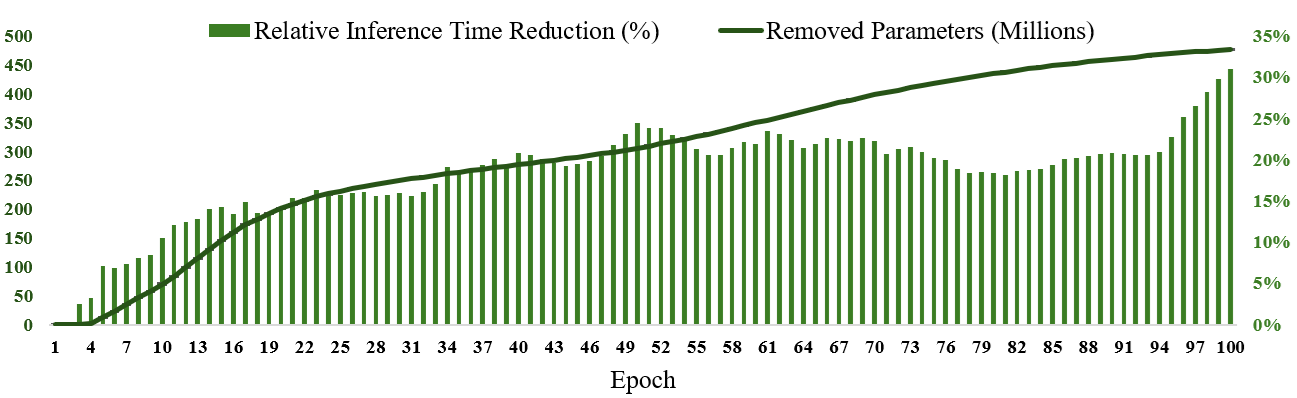}
\caption{CPU Inference time measurements along with number of removed parameters averaged across 3 seeds for a single validation epoch of the MRPC dataset.}
\label{fig:time_params_mrpc}
\end{figure*}

\section{Wall-clock Times measurements for APT and FineGates}
In Table \ref{tab:training_times} we show the total training time measured for APT and FineGates.

\begin{table}[h]
\centering
\caption{To compare wall-clock times, we trained both models on the SST-2 data for 50 epochs, measuring the total training time using the RoBERTa-base backbone.}
\resizebox{.3\linewidth}{!}{
\begin{tabular}{lc}
\toprule
\textbf{Model} & \textbf{Time (seconds)} \\ %& \textbf{Time (hours)} \\
\midrule
APT       & 37,515 \\ % & $\sim$10.4 \\
\rowcolor[HTML]{c9f2c4} FineGates & 35,010 \\ %& $\sim$9.7 \\
\bottomrule
\end{tabular}}
\vskip -0.1in

\label{tab:training_times}
\end{table}

\section{Proof of Lemma \ref{prop:two-sided-gates}}
\label{app::proof}
\begin{proof}
Write $\mathcal{L}(\bW,\bomega_r,\bomega_c):= f\!\big(\operatorname{Diag}(\bomega_r)\bW\operatorname{Diag}(\bomega_c)\big)$ and let
$G:=\nabla f(\widetilde{\bW})\in\mathbb{R}^{k\times d}$ evaluated at $\widetilde{\bW}=\operatorname{Diag}(\bomega_r)\bW\operatorname{Diag}(\bomega_c)$.
By the chain rule,
\[
\nabla_{\bW} F \;=\; \operatorname{Diag}(\bomega_r)\, G\, \operatorname{Diag}(\bomega_c)
\;=\; G \odot (\bomega_r\,\bomega_c^\top).
\]
For the gate variables, using
$\partial\widetilde{\bW}/\partial(\bomega_r)_i = e_ie_i^\top \bW\operatorname{Diag}(\bomega_c)$ and
$\partial\widetilde{\bW}/\partial(\bomega_c)_j = \operatorname{Diag}(\bomega_r)\bW e_je_j^\top$, we obtain
\begin{align*}
\big(\nabla_{\bomega_r} F\big)_i
&= \big\langle G,\, e_ie_i^\top \bW\operatorname{Diag}(\bomega_c)\big\rangle
 = \sum_{j=1}^d G_{ij}\, \bW_{ij}\, (\bomega_c)_j
 = \Big(\big(G\odot \bW\big)\,\bomega_c\Big)_i,\\
\big(\nabla_{\bomega_c} F\big)_j
&= \big\langle G,\, \operatorname{Diag}(\bomega_r)\bW e_je_j^\top\big\rangle
 = \sum_{i=1}^k G_{ij}\, \bW_{ij}\, (\bomega_r)_i
 = \Big(\big(G\odot \bW\big)^\top \bomega_r\Big)_j.
\end{align*}
Because $\nabla f$ is $L_f$-Lipschitz, the map $(\bW,\bomega_r,\bomega_c)\mapsto G$ is Lipschitz on any bounded set. Multiplication by $\operatorname{Diag}(\bomega_r)$ and $\operatorname{Diag}(\bomega_c)$ (with $\bomega_r,\bomega_c\in[0,1]$) is linear and non-expansive, hence the composite maps to $\nabla_{\bW} F$, $\nabla_{\bomega_r} F$, and $\nabla_{\bomega_c} F$ are Lipschitz on bounded sets. Since $h_r,h_c$ are smooth on $[0,1]$, their gradients are Lipschitz there; therefore,
\[
\nabla L \;=\; \nabla F \;+\; \lambda\big(0,\nabla h_r,\nabla h_c\big)
\]
is $L_L$-Lipschitz on a bounded set containing the iterates.

Let $\bz_t:=(\bW_t,\bomega_{r,t},\bomega_{c,t})$ and take one gradient step $\bz_{t+1}= \bz_t - \eta\nabla L(\bz_t)$ with $\eta\in(0,2/L_L)$. By the descent lemma,
\[
L(\bz_{t+1}) \;\le\; L(\bz_t) - \eta\Big(1-\tfrac{\eta L_L}{2}\Big)\,\|\nabla L(\bz_t)\|^2,
\]
so $L(\bz_t)$ is monotonically non-increasing. Since $f$ is bounded below and $h_r,h_c$ are bounded below on $[0,1]$, $L$ is bounded below; summing the inequality yields $\sum_t \|\nabla L(\bz_t)\|^2<\infty$, hence $\|\nabla L(\bz_t)\|\to 0$. Any accumulation point therefore satisfies the first-order optimality condition.
\end{proof}

\begin{remark}[Bias gating and one-sided special cases]
If a bias $b$ is gated as $\widetilde{b}=\bomega_r\odot b$ (or left ungated), the proof extends verbatim via standard column-augmentation.
Setting $\bomega_r=\mathbf{1}$ or $\bomega_c=\mathbf{1}$ recovers the one-sided gating case.
\end{remark}
\section{Kurtosis Score}
\label{appx:kurt}
Denote by $\mathbf{W} \in \mathbb{R}^{k \times d}$ a weight matrix selected from $\{ \mathbf{W}_q,\mathbf{W}_k,\mathbf{W}_v, \mathbf{W}_o, \mathbf{W}_{mlp}^i, \mathbf{W}_{mlp}^o\}$ in attention layer. For a given hidden representation $\bm{X} \in \mathbb{R}^{b \times t \times d}$, we compute the activation matrix $\bm{O} \in \mathbb{R}^{k \times d}$ by first averaging the tensor $\bm{X}$ in batch size and sequence length dimensions to obtain $ \mathbf{x}' \in \mathbb{R}^{1 \times d}$ by:
\begin{equation}
    \mathbf{x}'=\frac{1}{b \times t} \sum_{i=1}^{b}\sum_{j=1}^{t} \bm{X}_{i,j}
\end{equation}
and then compute the activation matrix by: 
\begin{equation}
\bm{O} =  \left[ \bm{\omega}_{r} \odot \bm{W} \odot \bm{\omega}_{c} \right] \odot \mathbf{x}'
\end{equation}

where $\odot$ denotes element-wise multiplication and $\bm{\omega}_{r},\bm{\omega}_{c}$  are gating vectors.

We then compute the kurtosis vectors $\bm{k}_{\text{cols}} = \text{kurt}(\bm{O}) \in \mathbb{R}^{1 \times d}$  and $\bm{k}_{\text{rows}} = \text{kurt}(\bm{O}^T) \in \mathbb{R}^{1 \times k}$ , where $\text{kurt}(\cdot)$ denotes Pearson’s kurtosis \citep{decarlo1997meaning}.

Note that aggregating tensor $\bm{X}$ by averaging over the batch and sequence dimensions is a simplification. A more accurate approach would compute kurtosis vectors for each batch and sequence element individually, following by averaging the vectors $\bm{k}_{\text{cols}}, \bm{k}_{\text{rows}}$. However, in practice, this computation is considerably more expensive and requires substantially more GPU memory.

\section{Statistics of the learned gates across different Transformer layers}
We analyze the statistics of the learned gates across different Transformer layers (Figure~\ref{fig:gates_stats}), and observe that the final layers tend to become sparse earlier in training. This behavior is consistent with prior findings, such as \citet{sanh2020movement} and \citet{gordon2020compressing}, which report that later layers are more amenable to pruning without significantly harming performance.

\begin{figure*}[h]
\centering
\includegraphics[width=.6\linewidth]{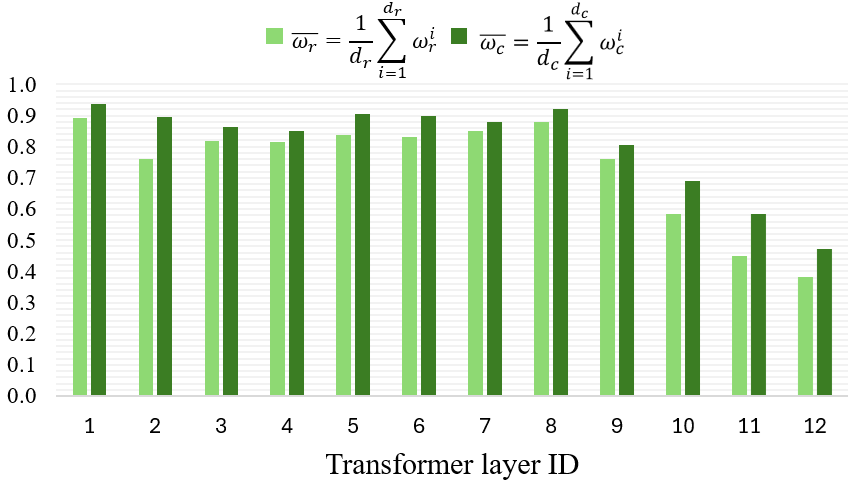}
\caption{For each Transformer layer in Roberta-Base model we present the gates values averaged for each gates row vector, $\frac{1}{d_r}\sum_{i=1}^{d_r} \bm{\omega}_r^i$, and  gates column vector $\frac{1}{d_c}\sum_{i=1}^{d_c} \bm{\omega}_c^i$, where $d_r, d_c$ are the dimensions of gates vectors $\bm{\omega}_r, \bm{\omega}_c$ correspondingly.}
\label{fig:gates_stats}
\end{figure*}

\section{LLAMA-1B Model}

\begin{table}[h!]
\centering
\caption{Model architecture for the pretraining experiment.}
\resizebox{.7\linewidth}{!}{
\begin{tabular}{rrrrrrr}
\toprule
\textbf{Params} & \textbf{Hidden} & \textbf{Intermediate} & \textbf{Heads} & \textbf{Layers} & \textbf{Steps} & \textbf{Data Amount} \\
\midrule
1B    & 2,048 & 5,461  & 24  & 32  & 20K & 2B \\
\bottomrule
\end{tabular}}

\label{tab:pretrain_model}
\end{table}

We set the maximum sequence length to 256 for all models, with a batch size of 512. We apply learning-rate warmup for the first 10,000 training steps, followed by a cosine annealing schedule that decays to 10\% of the initial learning rate. Training is stopped after processing 2B tokens from the training dataset. The model architecture is summarized in Table \ref{tab:pretrain_model}, and we use the T5-base tokenizer for this experiment.

\section{MaskLLM Evaluation}
We evaluate the pre-trained model using the original implementation provided by the authors\footnote{\text{https://github.com/NVlabs/MaskLLM}}
. The masking model we tested was trained by the authors on a subset of the C4 dataset. While the original results were reported with a maximum sequence length of 4096, due to our limited computational budget we evaluate it with a maximum sequence length of 256. For a fair comparison, we train FineGates on the C4 dataset using the same sequence length.

\section{GLUE Benchmark statistics}
Presented in Table ~\ref{tab:glue}.

\begin{table*}[h]
\begin{center}
\caption{The GLUE benchmark datasets statistics}
\resizebox{.75\linewidth}{!}{
\begin{tabular}{cccccccccc} 
\hline
\textbf{Dataset} & MNLI & QQP & QNLI & SST2 & COLA & STSB & MRPC & RTE \\
\hline
\textbf{Samples} & 392,702 & 363,846 & 104,743 & 67,349 & 8,551 & 5,749 & 3,668 & 2,490 \\
\hline
\end{tabular}}

\label{tab:glue}
\end{center}
\end{table*}

\section{Hyperparameters for training}
We present the hyperparameters for RoBERTa models in Table~\ref{tab:glue_hyperparams}, for comparisons with the APT model in Table~\ref{tab:roberta_base_hyperparams}, and for LLaMA experiments in Table~\ref{tab:llama_hyperparams}. Each dataset includes its own validation set, which is used for evaluation. In target-task fine-tuning experiments, we train all models with the Adam optimizer and decoupled weight decay regularization \citep{loshchilov2017decoupled}, optimizing $\bm{\Omega}$ with fixed learning rates across tasks. For sparsified base-model pretraining and post-training pruning, we restrict the training budget to 2B tokens and apply learning-rate warmup followed by a cosine scheduler.

\begin{table*}[t]
\centering
\caption{Hyperparameters for Table 1.}
\resizebox{.8\linewidth}{!}{
\begin{tabular}{lcccccccc}
\toprule
\textbf{Hyperparameter / Dataset} & \textbf{CoLA} & \textbf{STS-B} & \textbf{MRPC} & \textbf{RTE} & \textbf{SST-2} & \textbf{MNLI} & \textbf{QNLI} & \textbf{QQP} \\
\midrule
\multicolumn{9}{c}{RoBERTa-Base} \\
 lr bias + head & 1e-4 & 1e-4 & 1e-4 & 1e-4 & 1e-4 & 1e-4 & 1e-4 & 1e-4 \\
 lr gates        & 1e-3 & 1e-3 & 1e-3 & 1e-3 & 1e-3 & 1e-3 & 1e-3 & 1e-3 \\
 lr schedule     & constant & constant & constant & constant & constant & constant & constant & constant \\
 epochs          & 100 & 100 & 100 & 100 & 50 & 50 & 50 & 50 \\
 lambda          & 50 & 10 & 50 & 50 & 0.1 & 0.01 & 0.01 & 0.01 \\
 max seq length  & 512 & 512 & 512 & 512 & 512 & 512 & 512 & 512 \\
\midrule
\multicolumn{9}{c}{RoBERTa-Large} \\
 lr bias + head & 1e-4 & 1e-4 & 1e-4 & 1e-4 & 1e-4 & 1e-4 & 1e-4 & 1e-4 \\
 lr gates        & 1e-3 & 1e-3 & 1e-3 & 1e-3 & 1e-3 & 1e-3 & 1e-3 & 1e-3 \\
 lr schedule     & constant & constant & constant & constant & constant & constant & constant & constant \\
epochs          & 50 & 50 & 50 & 50 & 100 & 100 & 100 & 100 \\
lambda          & 10 & 10 & 10 & 10 & 0.1 & 0.1 & 0.1 & 0.1 \\
max seq length  & 512 & 512 & 512 & 512 & 512 & 512 & 512 & 512 \\
\bottomrule
\end{tabular}}

\label{tab:glue_hyperparams}
\end{table*}
\begin{table*}[h]
\centering
\caption{Hyperparameters for Table 2.} %\ref{tab:sparsity_apt}.}
\resizebox{.65\linewidth}{!}{
\begin{tabular}{llcccc}
\toprule
 \textbf{Hyperparameter / Dataset} & \textbf{CoLA} & \textbf{STS-B} & \textbf{MRPC} & \textbf{RTE} & \textbf{SST-2} \\
\midrule
\multicolumn{6}{c}{RoBERTa-Base} \\
lr bias + head & 1e-4 & 1e-4 & 1e-4 & 1e-4 & 1e-4 \\
lr gates        & 1e-3 & 1e-3 & 1e-3 & 1e-3 & 1e-3 \\
lr schedule     & constant & constant & constant & constant & constant \\
epochs          & 100 & 100 & 100 & 100 & 50 \\
lambda          & 100 & 100 & 100 & 100 & 100 \\
max seq length  & 512 & 512 & 512 & 512 & 512 \\
\bottomrule
\end{tabular}}

\label{tab:roberta_base_hyperparams}
\end{table*}
\begin{table*}[h]
\centering
\caption{Hyperparameters for Table 3.} %\ref{tab:llama1b_glue}.}
\resizebox{.65\linewidth}{!}{
\begin{tabular}{llcccc}
\toprule
\textbf{Hyperparameter/ Dataset} & \textbf{CoLA} & \textbf{STS-B} & \textbf{MRPC} & \textbf{RTE} & \textbf{SST-2} \\
\midrule
\multicolumn{6}{c}{LLAMA-3.2-1B} \\
lr bias + head & 1e-4 & 1e-4 & 1e-4 & 1e-4 & 1e-4 \\
lr gates        & 1e-3 & 1e-3 & 1e-3 & 1e-3 & 1e-3 \\
lr schedule     & constant & constant & constant & constant & constant \\
epochs          & 100 & 100 & 100 & 100 & 50 \\
lambda          & 0.01 & 0.01 & 0.01 & 0.01 & 0.01 \\
max seq length  & 512 & 512 & 512 & 512 & 512 \\
\bottomrule
\end{tabular}}

\label{tab:llama_hyperparams}
\end{table*}

% \section{Sprase Model Preraining}
\begin{figure}[t]
\centering
\includegraphics[width=.7\linewidth]{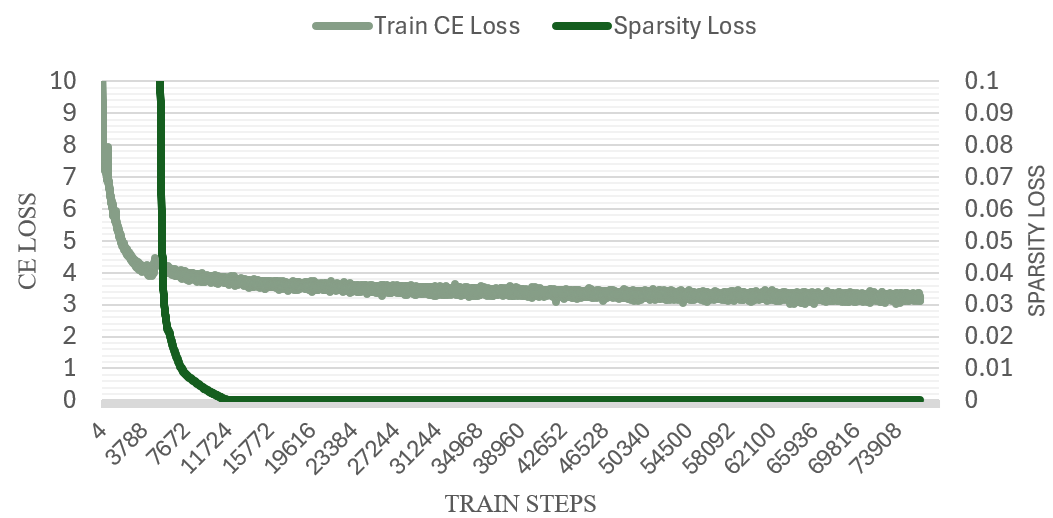}
\caption{Train loss convergence of FineGates + FFT method in Section 5.7}
\label{fig:pretrain_train}
\end{figure}

\begin{figure}[t]
\centering
\includegraphics[width=.7\linewidth]{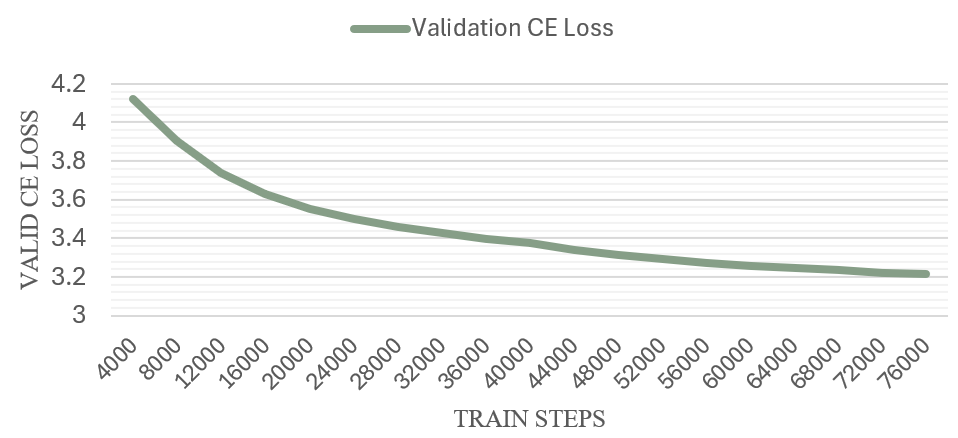}
\caption{Validation loss convergence of FineGates + FFT method in Section 5.7}
\label{fig:pretrain_validation}
\end{figure}

\section{Empirical Loss Convergence Plots}
To support our theoretical claim regarding model convergence, we present the loss curves of the pre-training LLaMA-1B base model with FineGates in Figures \ref{fig:pretrain_train} and \ref{fig:pretrain_validation}.

\section{Hardware used for experiments.}
We conducted training and evaluation of the compressed LLaMA-3.2-1B backbone model on a single NVIDIA RTX 6000 GPU paired with an AMD EPYC 9334 32-core processor. While training was performed on the GPU, evaluation wall-clock time measurements were obtained on the CPU. For all other experiments, we used a single NVIDIA H200 GPU with an Intel® Xeon® Platinum 8568Y+ CPU for both training and evaluation.

\section{Limitations}
\label{app:limitations}
While our method demonstrates strong performance and efficiency gains, it is important to acknowledge certain limitations. Our experiments are restricted to the GLUE benchmark. This reflects the practical constraints faced by smaller academic labs, which often lack access to large-scale computing resources. Although we could not evaluate larger models such as LLaMA-2 13B or GPT-style architectures, we contend that results from smaller-scale models remain valuable. They provide rigorous, reproducible insights and can inform innovations that may later scale to larger systems. Moreover, GLUE offers a standardized benchmark for comparison with prior work, making it a meaningful testbed for early-stage progress, and is still used by many recent works, i.e. \citet{zhang-etal-2025-parameter},\citet{lei2024fast},\citet{wang2024lora},\citet{refael2024adarankgrad},\citet{balazy2024lora},\citet{ceritli2024study}, \citet{ miles2024velora}.

\section{Baselines Description}
\label{app:baselines}
We compare our method against full fine-tuning—where the model is initialized with pre-trained weights and all parameters are updated during training—as well as several recently proposed efficient fine-tuning methods:
\begin{itemize}
    \item LoRA \citep{hu2021lora} with rank$=4$ applied additionally to the weight matrices $\widetilde{W}_{\text{mlp}}$ as in our method, which results in total $0.7$M trainable parameters for Roberta-Base model and $1.8$M for Roberta-Large base model.
    
    \item BitFit \citep{zaken2021bitfit} - an efficient finetuning method where only biases are trained for each adapted layer in the base model.
    \item  VeRA \citep{kopiczko2023vera} - vector-based random matrix adaptation where the number of trainable parameters is significantly lower than in LoRA without pruning or compression of the base model.
    \item LoRA-XS \citep{balazy2024lora} further reduces the number of trainable parameters by learning a small matrix positioned between two frozen low-rank matrices.
    \item RoCoFT$_{\text{1-Row}}$ \citep{kowsher2024rocoft} - the method proposed to directly train a single row or column in the base model without adding adapter weights. 
    \item APT \citep{zhao2024apt} - a recently proposed finetuning method that allows compression of the base model while training adapter weights.  Our approach differs from APT in two main points: (1) we learn the gates for the base model weights jointly with the finetuning task objective, and (2) our model obtains comparable results without training additional low-rank matrices $\mathbf{W}_A, \mathbf{W}_B$ resulting in optimization of less trainable parameters.
    \item VeLoRA \citep{miles2024velora} an adapter-based method similarly to LoRA which is trained with low gradients projections.

    \item MaskLLM \citep{fang2024maskllm} is a method that prunes the LLM models in a semi-structured (or ``N:M'') way, aimed at reducing computational overhead during inference. Instead of developing a new importance criterion, MaskLLM explicitly models N:M patterns as a learnable distribution through Gumbel Softmax sampling.
    
\end{itemize}

\end{document}